\newtheorem{theorem}{Theorem}
\newtheorem{lemma}[theorem]{Lemma}
\newtheorem{definition}{Definition}[section]
\title{Geodesic Graph Neural Network for Efficient Graph Representation Learning}
\author{%
  Lecheng Kong \\
  Washington University in St. Louis\\
  \texttt{jerry.kong@wustl.edu} \\
  \And
  Yixin Chen \\
  Washington University in St. Louis\\
  \texttt{ychen25@wustl.edu}\\
  \And
  Muhan Zhang \\
  Peking University\\
  \texttt{muhan@pku.edu.cn}\\
  % examples of more authors
  % \And
  % Coauthor \\
  % Affiliation \\
  % Address \\
  % \texttt{email} \\
  % \AND
  % Coauthor \\
  % Affiliation \\
  % Address \\
  % \texttt{email} \\
  % \And
  % Coauthor \\
  % Affiliation \\
  % Address \\
  % \texttt{email} \\
  % \And
  % Coauthor \\
  % Affiliation \\
  % Address \\
  % \texttt{email} \\
}
\begin{document}

\maketitle

\begin{abstract}
  Graph Neural Networks (GNNs) have recently been applied to graph learning tasks and achieved state-of-the-art (SOTA) results. However, many competitive methods run GNNs multiple times with subgraph extraction and customized labeling to capture information that is hard for normal GNNs to learn. Such operations are time-consuming and do not scale to large graphs. In this paper, we propose an efficient GNN framework called Geodesic GNN (GDGNN) that requires only one GNN run and injects conditional relationships between nodes into the model without labeling. This strategy effectively reduces the runtime of subgraph methods. Specifically, we view the shortest paths between two nodes as the spatial graph context of the neighborhood around them. The GNN embeddings of nodes on the shortest paths are used to generate geodesic representations. Conditioned on the geodesic representations, GDGNN can generate node, link, and graph representations that carry much richer structural information than plain GNNs. We theoretically prove that GDGNN is more powerful than plain GNNs. We present experimental results to show that GDGNN achieves highly competitive performance with SOTA GNN models on various graph learning tasks while taking significantly less time.

\end{abstract}

\section{Introduction}
Graph Neural Network (GNN) is a type of neural network that learns from relational data. With the emergence of large-scale network data, it can be applied to solve many real-world problems, including recommender systems \cite{recommend}, protein structure modeling \cite{proteinmodel}, and knowledge graph completion \cite{kgcompl}. The growing and versatile nature of graph data pose great challenges to GNN algorithms both in their performance and their efficiency. 

GNNs use message passing to propagate features between connected nodes in the graph, and the nodes aggregate their received messages to generate representations that encode the graph structure and feature information around them. These representations can be combined to form multi-node structural representations. Because GNNs are efficient and have great generalizability, they are widely employed in node-level, edge-level, and graph-level tasks. A part of the power of GNNs comes from the fact that they resemble the process of the 1-dimensional Weisfeiler-Lehman (1-WL) algorithm \cite{wl}. The algorithm encodes subtrees rooted from each node through an iterative node coloring process, and if two nodes have the same color, they have the same rooted subtree and should have very similar surrounding graph structures. However, as pointed out by \citet{gin}, GNN's expressive power is also upper-bounded by the 1-WL test. Specifically, GNN is not able to differentiate nodes that have exactly the same subtrees but have different substructures. For example, consider the graph in Figure \ref{fig:regular_graph} with two connected components, because all nodes have the same number of degrees, they will have exactly the same rooted subtrees. Nevertheless, the nodes in the left component are clearly different from the nodes in the right component, because the left component is a 3-cycle and the right one is a 4-cycle. Such cases can not be discriminated by GNNs or the 1-WL test. We refer to this type of GNN as plain GNN or basic GNN.

\begin{wrapfigure}[9]{L}{0.3\textwidth}
  \begin{center}
    \includegraphics[trim={0 12cm 15cm 2cm}, width=0.29\textwidth]{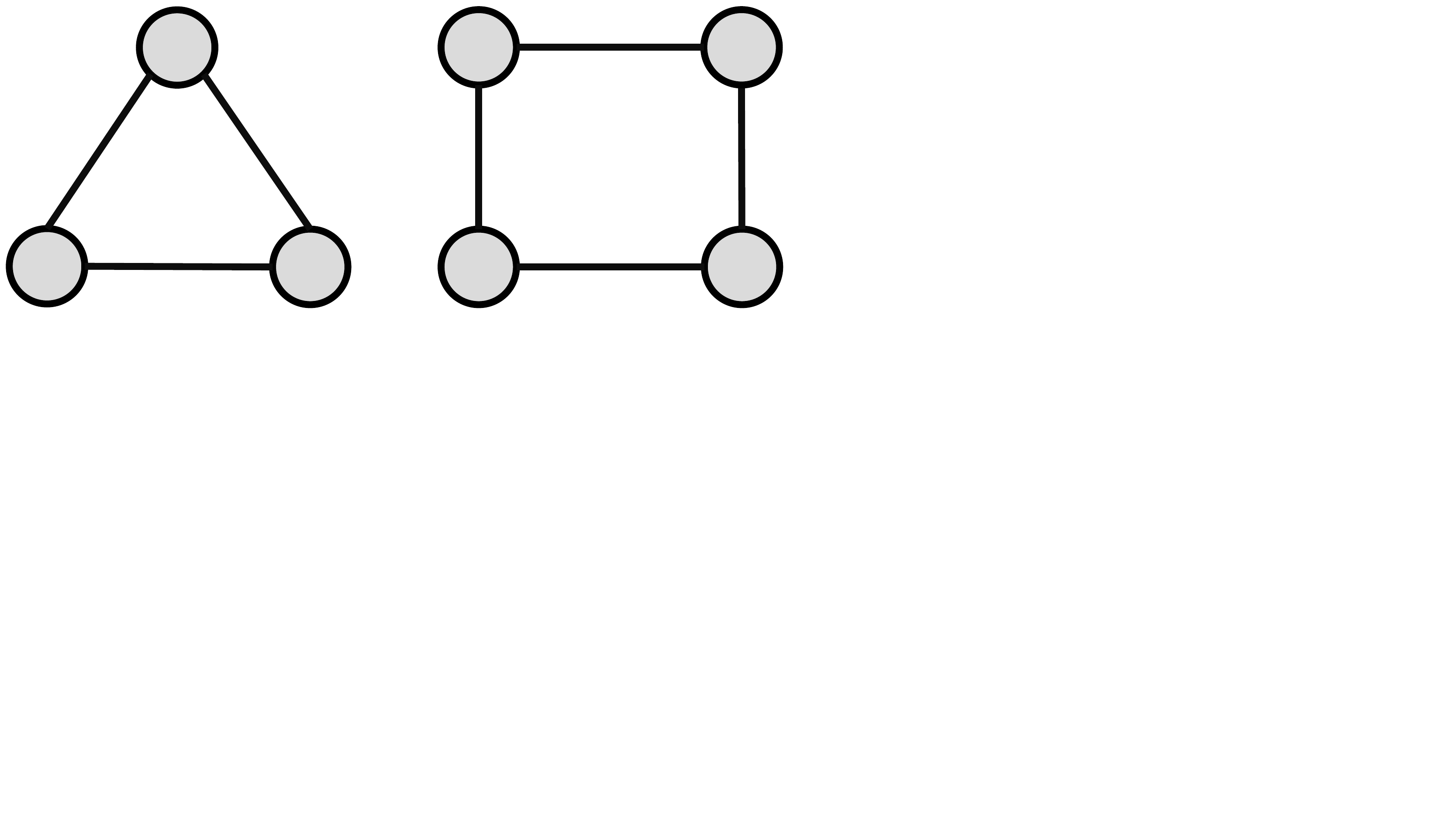}
  \end{center}
  \caption{An example where plain GNN fails to distinguish nodes in the graph.}\label{fig:regular_graph}
\end{wrapfigure}
On the other hand, even when a carefully-designed GNN can differentiate all node substructures in a graph, \citet{revisit} show that the learned node representation still cannot effectively perform structural learning tasks involving multiple nodes, including link predictions and subgraph predictions, because the nodes are not able to learn the relative structural information. We will discuss this in more detail in Section~\ref{sec:intuition}.

These limitations motivate recent research that pushes the limit of GNN's expressiveness. One branch is to use higher-order GNNs that mimic higher-dimensional Weisfeiler-Lehman tests \cite{kgnn, ppgn}. They extend node-wise message passing to node-tuple-wise and obtain more expressive representations. Another branch is to use labeling methods. By extracting subgraphs around the nodes of interest, they design a set of subgraph-customized labels as additional node features. The labels can be used to distinguish between the nodes of interest and other nodes. The resulting node representations can encode substructures that plain GNNs cannot, like cycles.

However, the major drawback of these methods is that compared to plain GNNs, their efficiency is significantly compromised. Higher-order GNNs update embeddings over tuples and incur at least cubic complexity \cite{ngnn, kgnn}. Labeling tricks usually require subgraph extraction on a large graph and running GNNs on mini-batches of subgraphs. Because the subgraphs overlap with each other, the aggregated size of the subgraphs is possibly hundreds of times the large graph. Nowadays GNNs are massive, the increased graph computation cost makes labeling methods inapplicable to real-world problems with millions of nodes.

We observe that a great portion of the labeling methods relies on geodesics, namely the shortest paths, between nodes. For node representation, Distance-Encoding (DE)~\cite{degnn} labels the target nodes' surrounding nodes with their shortest path distance to the target nodes and achieves higher than 1-WL test expressiveness. Current state-of-the-art methods for link prediction, such as SEAL \cite{seal}, leverage the labeling tricks in order to capture the shortest path distance along with the number of shortest paths between the two nodes of the link. Inspired by this, we propose 
our graph representation learning framework, Geodesic GNN (GDGNN), where we run GNN \textbf{only once} and inject higher expressiveness into the model using shortest paths information by a geodesic pooling layer. Our framework can obtain higher than 1-WL power without multiple runs of plain GNNs, which largely improves the efficiency of more expressive methods.

Specifically, our GDGNN model learns a function that maps the learning target to a geodesic-augmented target representation (the target can be a node, edge, graph, etc). The model consists of two units, a GNN unit, and a geodesic pooling unit. The two units are connected to perform end-to-end training. We first apply the GNN to the graph to obtain embeddings of all nodes. Then, for each target, we find the task-specific geodesics and extract the corresponding GNN representations of nodes on the geodesics. The task-specific geodesics have three levels. For node-level tasks, we extract the geodesics between the target node and all of its neighbors. For edge-level tasks, we extract the geodesics between the two target nodes of the edge. For graph-level tasks, we get the geodesic-augmented node representation for each node as in node-level tasks and readout all node representations as the graph representation. Finally, using the geodesic pooling unit, we combine the geodesic representation and target representation to form the final geodesic augmented target representation. Figure \ref{fig:framework} summarizes the GDGNN framework.

All task-specific geodesics are formed by one or multiple \textbf{pair-wise} geodesics between two nodes. We propose two methods to extract the pair-wise geodesic information, horizontal and vertical. The horizontal geodesic is obtained by directly extracting one shortest path between the two nodes. The vertical geodesic is obtained by extracting all direct neighbors of the two nodes that are on any of their shortest paths. They focus on depth and breadth respectively. Horizontal geodesic is good at capturing long-distance information not covered by the GNN. Vertical geodesic is \textit{provably more powerful} than plain GNN. Moreover, by incorporating the information of the subgraph induced by the vertical geodesic nodes, we can use GDGNN to distinguish some \textit{distance-regular graphs}, a type of graph that many current most powerful GNNs cannot distinguish.

During inference, the GNN unit is applied only once to the graph, after which we feed nodes and geodesic information to the efficient geodesic pooling unit. This limits the number of GNN runs from the query number (or the number of nodes for graph-level tasks) to a constant of one and remarkably reduces the computation cost. \textit{Compared to labeling methods, GDGNN only requires one GNN run and is much more efficient. Compared to plain GNN methods, GDGNN uses geodesic information and thus is more powerful.} We conducted experiments on node-level, edge-level, and graph-level tasks across datasets with different scales and show that GDGNN is able to significantly improve the performance of basic GNNs and achieves very competitive results compared to state-of-the-art methods. We also demonstrate GDGNN's runtime superiority over other more-expressive GNNs.
\begin{figure}[t]
\setlength{\belowcaptionskip}{-9pt}
         \includegraphics[trim={0.3cm 10cm 1.5cm 0},clip,width=\textwidth]{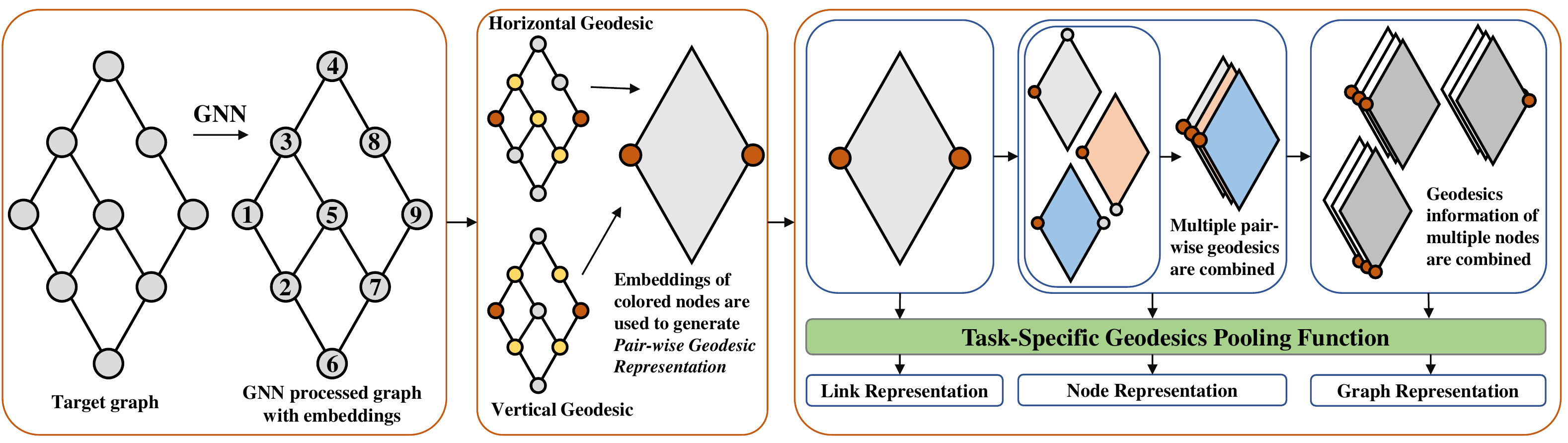}
        \caption{The overall pipeline of GDGNN. GNN is applied once to generate node embeddings. We then use horizontal or vertical geodesic to extract pair-wise geodesic representation. One or more such representations are collected to generate task-specific representation.}
        \label{fig:framework}
\end{figure}
\section{Preliminaries}
 A graph can be denoted by $\mathcal{G}=(\mathcal{V},\mathcal{E},\mathcal{R})$, where $\mathcal{V}=\{v_1,...,v_n\}$ is the node set, $\mathcal{E}\subseteq\{(v_i,r,v_j)|v_i,v_j\in \mathcal{V}, r\in \mathcal{R}\}$ is the edge set and $\mathcal{R}$ is the set of edge types in the graph. When $|\mathcal{R}|=1$, the graph can be referred to as a homogeneous graph. When $|\mathcal{R}|>1$, the graph can be referred to as a heterogeneous graph. Knowledge graphs are usually heterogeneous graphs. The nodes can be associated with features $\mathcal{X}=\{\bm{x}_v|\forall v\in \mathcal{V}\}$. For simplicity, we use homogeneous graphs to illustrate our ideas and methods.
 
 We follow the definition of message passing GNNs in \cite{gin} with AGGREGATE and COMBINE functions. Such GNNs iteratively update the node embeddings by receiving and combining the neighbor nodes' embeddings. The $k$-th layer of a GNN can be expressed as:
 \begin{align}
     \bm{m}_u^{(k)}=\text{AGGREGATE}^{(K)}(\bm{h}_u^{(k-1)}),\quad \bm{h}_v^{(k)}=\text{COMBINE}(\{\bm{m}_u^{(k)},u\in \mathcal{N}(v)\}, \bm{h}_v^{(k-1)})
 \end{align}
 where $\bm{h}_v^{(k)}$ is the node representation after $k$ iterations, $\bm{h}_v^{(0)}=\bm{x}_v$, $\mathcal{N}(v)$ is the set of direct neighbors of $v$, $\bm{m}_u$ is the message embedding. Different GNNs vary mainly by the design choice of the AGGREGATE and COMBINE functions. In this paper, we use $\bm{h}_v$ to represent the final node representation of $v$ from the plain GNN.
 In this paper, we use bold letters to represent vectors, capital letters to represent sets, and $\mathcal{N}^k(v)$ to refer to nodes within the k-hop neighborhood of $v$.
\section{Geodesic Graph Neural Network}
Geodesic is conventionally defined as the shortest path between two nodes and indeed in homogeneous graphs, it encodes only the distance. However, we found that when combined with GNNs and node representations, geodesics express information much richer than the shortest path distance alone. In this section, we first discuss the intuition behind GDGNN and its effectiveness. Then, we describe the GDGNN framework and how its GNN unit and geodesic pooling unit are combined to make predictions. Finally, we provide theoretical results of the better expressiveness of GDGNN.
\begin{figure}[t]
\centering
     \begin{subfigure}[b]{0.205\textwidth}
         \centering
         \includegraphics[trim={0 3.0cm 18.5cm 0},width=\textwidth]{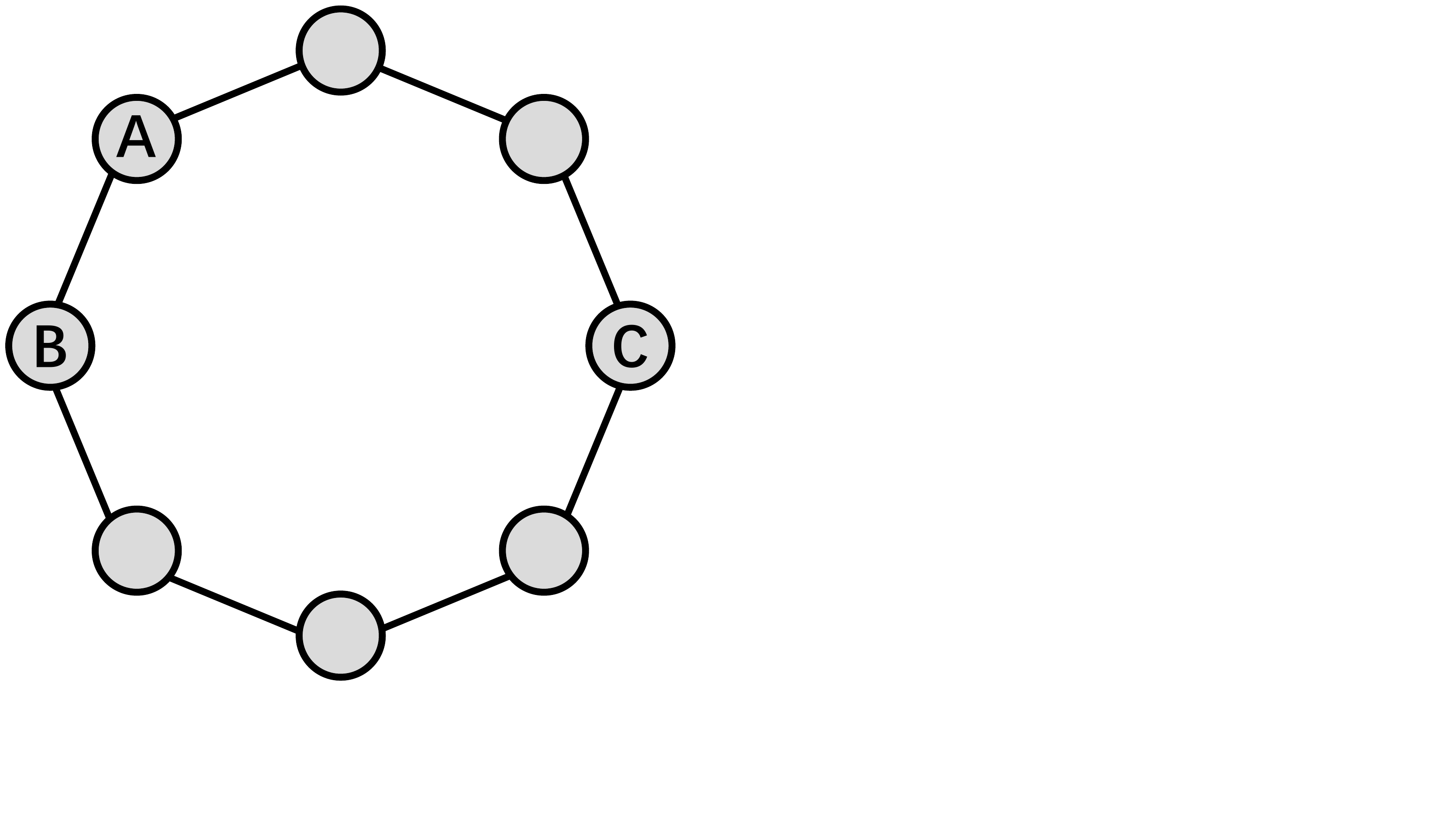}
         \caption{}
         \label{fig:exp1}
     \end{subfigure}
     \hfill
     \begin{subfigure}[b]{0.215\textwidth}
         \centering
         \includegraphics[trim={0cm 5cm 19.5cm 0cm}, width=\textwidth]{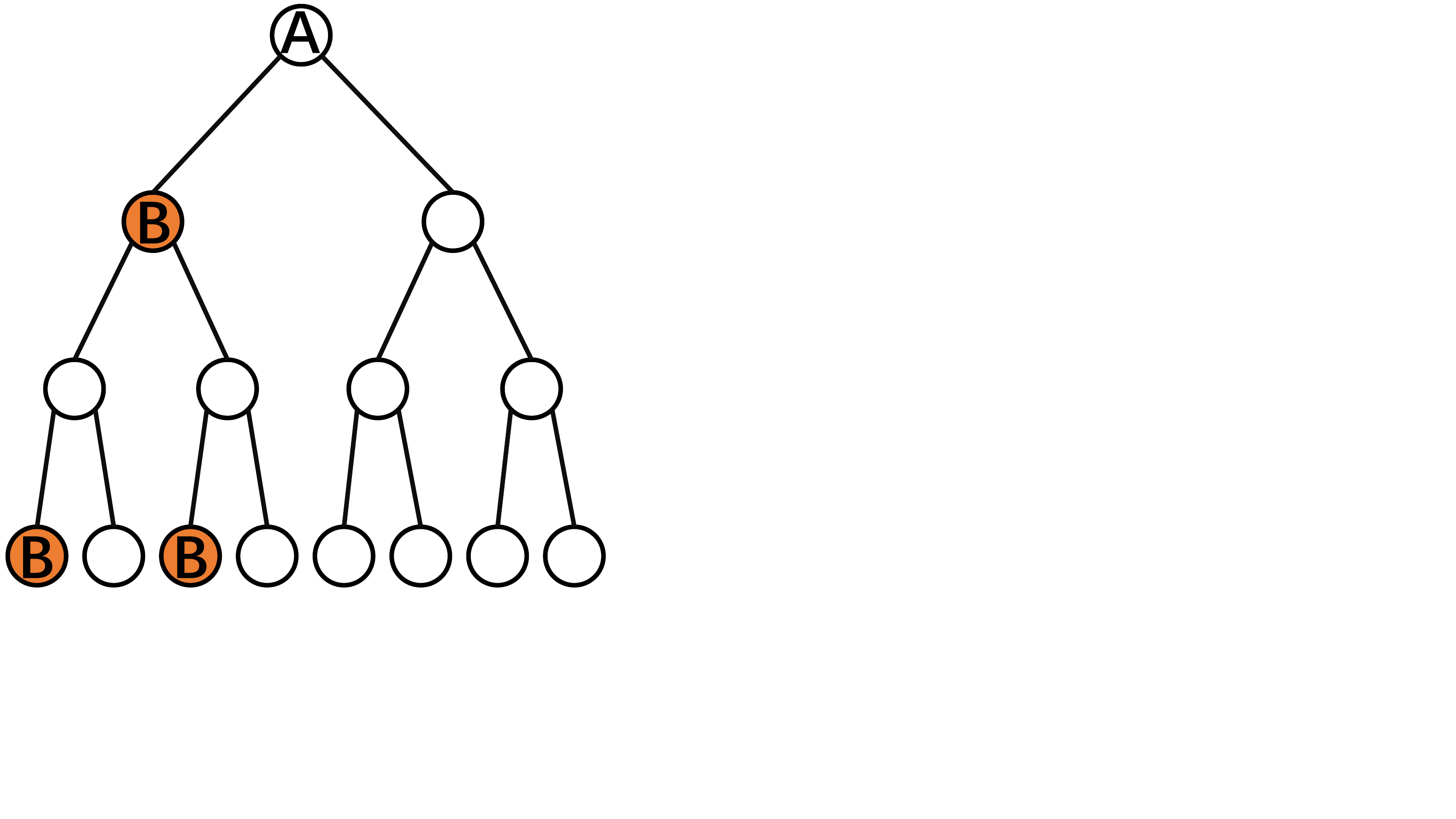}
         \caption{}
         \label{fig:exp2}
     \end{subfigure}
     \hfill
     \begin{subfigure}[b]{0.215\textwidth}
         \centering
         \includegraphics[trim={0cm 5cm 19.5cm 0cm},width=\textwidth]{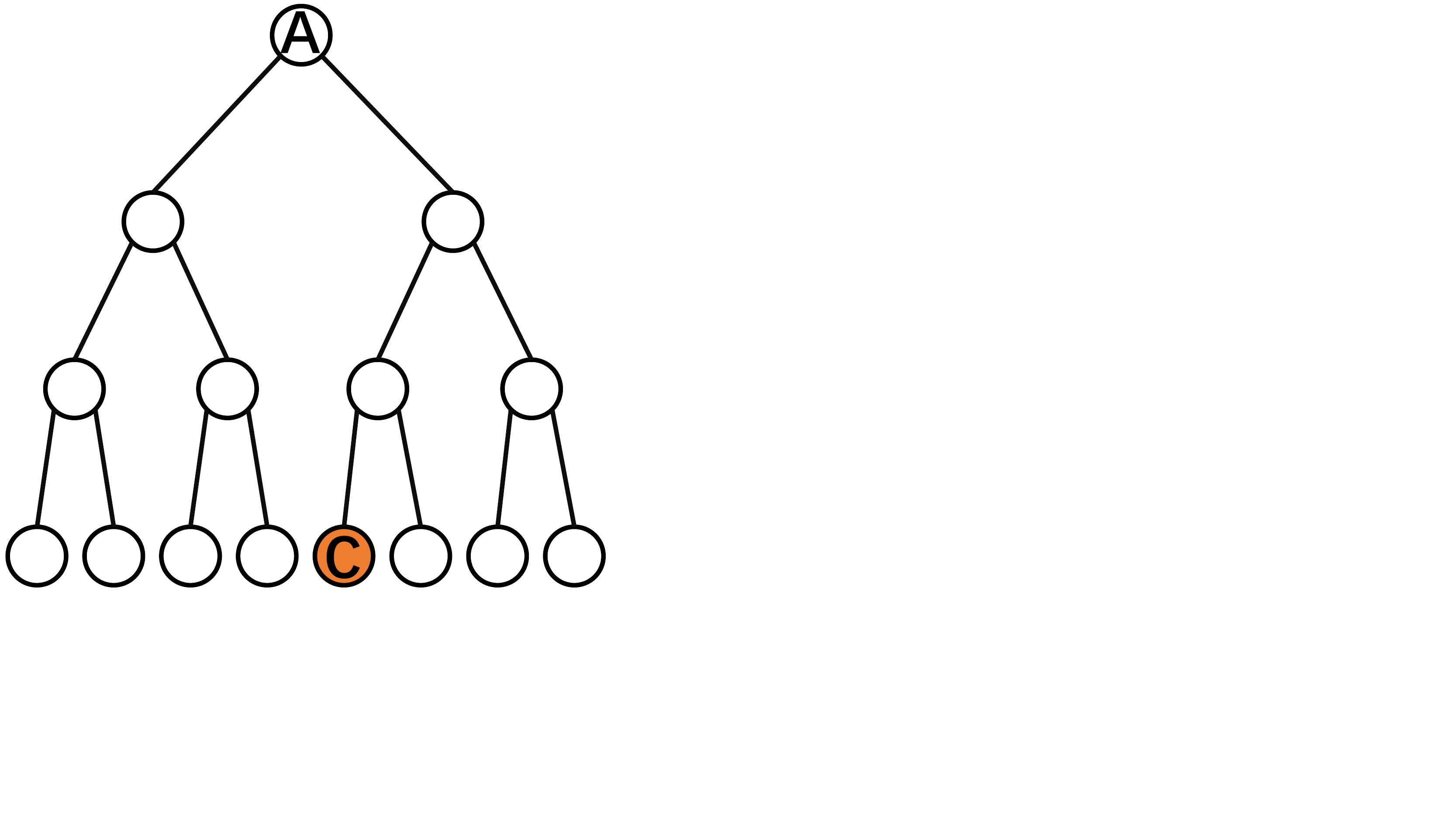}
         \caption{}
         \label{fig:exp3}
     \end{subfigure}
     \begin{subfigure}[b]{0.345\textwidth}
         \centering
        \includegraphics[trim={0.9cm 1cm 8cm 1cm},clip,width=\textwidth]{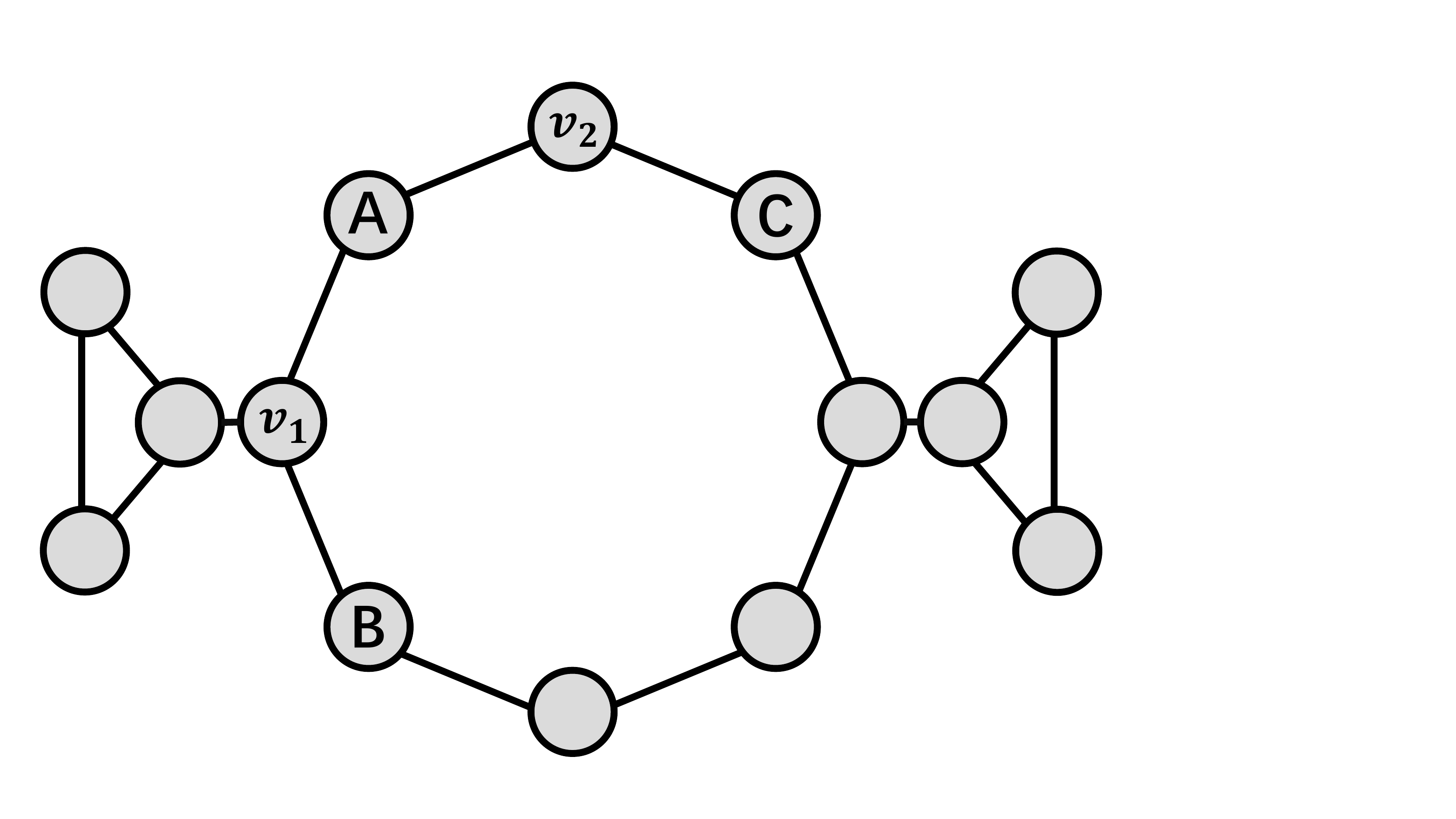}
         \caption{}
         \label{fig:exp4}
     \end{subfigure}
        \caption{(a) A circular graph where plain GNN cannot distinguish link AB and AC. (b) and (c) are the GNN computation graphs of A if B and C, respectively, are labeled. (d) is an example where geodesic representation can distinguish links AB and AC, but distance fails to do so.}
        \label{fig:nmdemo}
\end{figure}
\subsection{Intuition and Overview}\label{sec:intuition}
\setlength{\belowcaptionskip}{-5pt}
The key weakness of directly using plain GNNs is that the embeddings are generated without conditional information. For node-level tasks, as pointed out in \cite{idgnn}, a plain GNN is unaware of the position of the target node in the rooted subtree, and hence the target node is oblivious of the conditional information about itself. For edge-level tasks, \citet{revisit} shows that despite a node-most-expressive-GNN, without node labeling, the embedding of one node in the edge is not able to preserve the conditional information of the other node. Concretely, take link prediction as an example, in Figure \ref{fig:exp1}, we have two target links AB and AC. Since AB is already connected by a link and AC is not, they should be represented differently. On the other hand, All nodes in the graph are symmetric and should be assigned the same embeddings using any GNN. Hence, if we simply concatenate the node representations of AB and AC, they will have the same link representations, and cannot be differentiated by any downstream learner. To overcome this problem, previous methods rely on labeling tricks to augment the computation graphs with conditional information. When node B and C are labeled respectively in two separate subgraphs, the resulting computation graphs of A are different as shown in Figure \ref{fig:exp2} and \ref{fig:exp3}.

Notice that, for the labeling methods to take effect, subgraph extraction is necessary. However, relabeling the graph and applying a GNN to every target to predict is extremely costly. Hence, we wonder whether we can also incorporate conditional information between nodes without customized labeling. Observe that in the example, the computation graph captures the number of layers before node A first receives a message from a labeled node, which is essentially the shortest path distance $d$ between A and the other node. This implies that even when we \textbf{only} have node representations from a plain GNN, by adding the distance information (which is 1 and 3 for AB and AC respectively) independent of the computation graph, we can still distinguish the links.

While the previous example demonstrates the effectiveness of a simple shortest path distance metric, we notice that distance alone is not always sufficient. For example, in Figure \ref{fig:exp4}, node B and node C are symmetric and hence should be assigned the same node embedding when the graph is not labeled. However, links AB and AC are apparently different as $v_1$ connecting AB connects to another group of nodes whereas $v_2$ connecting AC does not connect to any other node. If we only use the shortest path distance between AB and AC to assist the prediction, we still cannot differentiate them. However, if we inject the \textbf{entire geodesic} expressed by the corresponding \textbf{node embeddings along the shortest path} to the model, we are able to tell AB and AC apart due to $v_1$ and $v_2$'s different embeddings.

Consider a naive GNN that outputs all node embeddings as one. Taking the shortest path can be seen as extracting a one-valued vector with a length of the shortest distance between the two nodes. In the neural setting, a general GNN can model the naive GNN and the nodes on the same shortest path can have different representations besides one by encoding their own neighborhood. The extension of the shortest path to the neural setting grants stronger expressive power to the geodesics. The geodesic generation and the GNN are \textit{decoupled}, because the selection of geodesic nodes is independent of the node representations, while the node representations in subgraph GNNs are tied to a subgraph. Dwelling on this decoupling and conditioning philosophy, GDGNN can run the GNN \textbf{only once} to generate generic node embeddings and then use geodesic information to inject conditional information into the model in a much \textbf{more efficient} way.

\subsection{Pair-wise Geodesic Representation}
GDGNN has different ways to leverage geodesic information for different levels of tasks, but they all have the same basic component, pair-wise geodesic representation. It consists of structural information of nodes on the shortest paths of the pair. Directly extracting all such nodes is time-consuming and sometimes not effective, hence we present two primary ways to learn the geodesic representation, \textit{horizontal} and \textit{vertical}.
\subsubsection{Horizontal Geodesic Representation}

Horizontal Geodesic Representation is designed to directly capture the distinctiveness as described in the previous example shown in Figure \ref{fig:exp4}. It first finds the shortest path between two nodes as the geodesic. If there are multiple paths between them, we randomly choose one. The nodes on the shortest path is denoted by $P_{(u,v)}^{(H)}=\{u,n_1,n_2,...,n_{d-1},v\}$ with a length of $d$. %, where $n_i$ is only connected to $n_{i-1}$ and $n_{i+1}$. \
Using $P_{(u,v)}^{(H)}$ we can construct the geodesic representation $\bm{g}_{(u,v)}$ by a geodesic pooling layer.
\begin{equation}
    \bm{g}_{(u,v)}^{(H)} = R_{gd}^{(H)}(\{\bm{h}_{w}, \forall w\in P_{(u,v)}^{(H)}\})
\end{equation}

To prevent extremely long geodesics which may cause overfitting, we limit the maximum length to a manageable constant $d_{max}$. Any pair with a distance larger than $d_{max}$ will have an infinite distance and a zero geodesic representation.

Because of the high computation cost of GNNs, the number of GNN layers is usually very small. Using horizontal geodesics, we are able to connect nodes that are \textit{far away} and do not coexist in a GNN's limited-height computation graph.

\subsubsection{Vertical Geodesics Representation}\label{sec:vergd}
Horizontal geodesics seems like an intuitive choice to directly connect the pair of nodes by a sequence of adjacent nodes. However, a natural question to ask is whether a single path is enough to represent the conditional information between the pair. An example where horizontal geodesic falls short can be found in Appendix \ref{sec:horlim}. Realizing this limitation, we propose the vertical geodesic representation that includes all geodesic information in an efficient way.

For a pair of nodes $u$ and $v$, vertical geodesic extracts the \textbf{direct neighbors} of them that are on \textbf{any} of their shortest paths. Specifically, let the distance between $u$ and $v$ be $d(u,v)$, we have the nodes on their shortest paths to be $W_{(u,v)}=\{w|d(u,w)+d(w,v)=d(u,v),\forall w\in\mathcal{V}\}$. The geodesics can be represented as,
\begin{equation}
    P_{(u,v)}^{(V)}=P_{(u,v),u}^{(V)}\cup P_{(u,v),v}^{(V)}, \quad \text{where}~P_{(u,v),i}^{(V)}:=W\cap \mathcal{N}(i) , i\in \{u,v\}.
\end{equation}
Here $P_{(u,v),i}^{(V)}$ represents the nodes in the geodesic that are also the direct neighbors of $i$. For some tasks, we only consider one $P_{(u,v),i}^{(V)}$ as the geodesics instead of both. We use a pooling layer to generate vertical geodesic
\begin{equation}
    \bm{g}_{(u,v)}^{(V)} = R_{gd}^{(V)}(\{\bm{h}_{w}, \forall w\in P_{(u,v)}^{(V)}\}). \label{eq:ver_gd_plain_pool}
\end{equation}
While the vertical geodesic does not explicitly encode a long shortest path, it can still capture short shortest path information due to pooling the embeddings of the nodes in it. Compared with the horizontal geodesic that takes one random shortest path, the vertical geodesic focuses more on breadth by incorporating all direct neighbors on any shortest paths. 

Unlike nodes in horizontal geodesic, nodes in vertical geodesic induce a more complex subgraph. Consider only $P_{(u,v),v}^{(V)}$ (we will use $P_v$ here for simplicity) and the subgraph $\mathcal{G}_v$ induced by nodes in $P_v$. We can have a pooling function that encodes the graph structure of $\mathcal{G}_v$,
\begin{equation}
    \bm{g}_{v}^{(V)} = R_{gd}^{(G)}(\{\bm{h}_{w}, \forall w\in P_v\}, \mathcal{G}_v).
\end{equation}
The pooling layer can be a small plain GNN itself, and the initial embedding of $\mathcal{G}_v$ can take the pooled node embeddings from the main GNN. However, to keep our method efficient, we only add the degree of nodes in the subgraph to the node embeddings. And the vertical geodesic representation becomes,
\begin{equation}
    \bm{g}_{v}^{(V)} = R_{gd}^{(V)}(\{\bm{h}_{w}\oplus deg_{\mathcal{G}_v}(w), \forall w\in P_v\})\label{eq:vert_gd_deg},
\end{equation}
where $\oplus$ means concatenation. Note that a node's degree in the subgraph is different from that in the main graph. This can be seen as a 1-layer GNN applied onto $\mathcal{G}_v$ with identical initial node embeddings. Even with this simple pooling design, the vertical geodesic is able to help distinguish some distance regular graphs, which we cover in section \ref{sec:expressive_p}.

Because vertical geodesics do not encode distance directly, we concatenate distance to vertical geodesics as an additional feature. Like in horizontal geodesics, when $d(u,v)$ is greater than $d_{max}$, we do not generate the geodesic representation for the pair and set their distance to infinity.
\subsection{Task specific geodesic representation}\label{sec:task_specific}
Task-specific geodesic representation is a collection of pair-wise geodesic representations. Here, we focus on three levels of tasks, node-level, edge-level, and graph-level.

For \textbf{node-level} tasks, the goal is to generate a representation $\bm{z}_{v}^{(n)}$ for a target node $v$. We first find $v$'s k-hop neighbor nodes $S_v=\mathcal{N}^k(v)$. Then for each node $s_i$ in $S_v$, we find the pair-wise geodesic, $\bm{g}_{v,s_i}$. Note that in node level tasks, vertical geodesic only contains the geodesic nodes close to $s_i$, $P_{(v,s_i),s_i}^{(V)}$. This enables fast geodesic extraction, and Theorem \ref{therem:gdgnn} shows that with only one side of the vertical geodesics, GDGNN can already distinguish almost all pairs of regular graphs. The geodesic representations of the neighborhood are combined by a pooling layer,
\begin{equation}
    \bm{z}_v^{(n)} = R^{(n)}({\bm{g}_{(v,s_i)}|s_i \in S_v}).
\end{equation}
While the GNN node embeddings of the neighborhood are not conditioned on $v$, by adding the geodesic information to every neighbor, we inject the conditional information of $v$ into its neighbors. Then the collective information of the neighbors helps distinguish $v$.

For \textbf{edge-level} tasks, the goal is to generate an edge embedding $\bm{z}_{(u,v)}^{(e)}$ for nodes $u$ and $v$. Here, we directly use the pair-wise geodesic for the two nodes, $\bm{z}_{(u,v)}^{(e)}=\bm{g}_{(u,v)}$. As pointed out by previous works \cite{nbf,red}, the advantage of this setup is that, when the downstream task is to rank one link against $N$ other links with one same node $v$, we only need to compute the distance vector from $v$ once and find the geodesic nodes efficiently.

For \textbf{graph-level} tasks, we need to generate a graph embedding $\bm{z}_\mathcal{G}^{(g)}$ for $\mathcal{G}\!=\!(\mathcal{V},\mathcal{E})$. We use another pooling layer to summarize the geodesic-augmented representation of every node in the graph,
\begin{equation}
    \bm{z}_\mathcal{G}^{(g)} = R^{(g)}({\bm{z}_{v}|v \in \mathcal{V}}).
\end{equation}
GDGNN can also be easily generalized to other levels of tasks such as subgraph-level.
\subsection{Expressive Power of GDGNN}\label{sec:expressive_p}
Distinguishing power on graph-level tasks is often used to characterize the expressiveness of a GNN model. For GNN models to be more expressive than the 1-WL test, we can test whether they are able to distinguish regular graphs. In the following theorem, we show that given an injective $R^{(g)}$, the graph embedding $\bm{z}_G$ generated based on vertical geodesics can distinguish most regular graphs.
\begin{theorem}
Let the graph pooling layer $R^{(g)}$ be injective given input from a countable space. Consider all pairs of n-sized r-regular graphs, where $3\leq r < \sqrt{2\log n}$, $n$ is the number of nodes in the graph. For any small constant $\epsilon>0$, there exists a GDGNN using vertical geodesic representations with $d_{max}=\lceil (\frac{1}{2}+\epsilon)\frac{\log n}{\log(r-1-\epsilon)}\rceil$ which distinguishes almost all $(1-o(1))$ such pairs of graphs.\label{therem:gdgnn}
\end{theorem}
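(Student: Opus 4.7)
The plan is to combine an analysis of what the vertical-geodesic readout encodes at the graph level with a classical concentration result about random $r$-regular graphs. First I would unpack the graph embedding: since $R^{(g)}$ is injective on a countable domain and the AGGREGATE/COMBINE functions may be chosen freely (even trivial), $\bm{z}_{\mathcal{G}}^{(g)}$ is, without loss of generality, an injective function of the multiset, taken over ordered pairs $(u,v)$ with $d(u,v)\le d_{max}$, of tuples of the form $\bigl(d(u,v),\,\{\!\{\deg_{\mathcal{G}_v}(w):w\in P_{(u,v),v}^{(V)}\}\!\}\bigr)$. From this one can read off, for each root $v$, the sphere-size sequence $s_k(v)=|\{u:d(u,v)=k\}|$ for $k\le d_{max}$, together with the local edge arrangement among the ``last-layer'' neighbours at each radius.

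Next I would invoke the classical result of Bollob\'as on random $r$-regular graphs: for $3\le r<\sqrt{2\log n}$, with probability $1-o(1)$ the breadth-first search from every vertex is a tree up to depth roughly $\tfrac{1}{2}\log_{r-1}n$, and the profile $(s_1(v),\dots,s_{d_{max}}(v))$, enriched by the induced-subgraph degrees at depth $d_{max}$, is vertex-distinguishing. The prescribed threshold $d_{max}=\lceil(\tfrac{1}{2}+\epsilon)\log n/\log(r-1-\epsilon)\rceil$ is just above the ``birthday'' radius at which BFS balls first close a cycle, which is precisely where these profiles become simultaneously well defined and rich enough to serve as canonical vertex labels. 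A union bound over non-isomorphic $r$-regular graph pairs and over vertices then shows that two graphs producing the same multiset of such profiles must be isomorphic with probability $1-o(1)$.

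The main obstacle is the second step: I must carefully identify the quantities captured by the vertical-geodesic pooling with the invariants used in the canonical-labelling argument. In particular, one has to verify that the $\deg_{\mathcal{G}_v}(w)$ augmentation in equation (\ref{eq:vert_gd_deg}) is exactly what records the first cycle closing inside the depth-$d_{max}$ BFS ball around each vertex; in a regular graph it is the appearance and location of such a cycle that breaks ties between otherwise indistinguishable vertices. Making this correspondence rigorous, and combining it with the union bound that controls the $o(1)$ exceptional pairs, is the delicate part of the argument; the remaining steps reduce to bookkeeping about how pair-wise geodesic tuples aggregate into the graph-level readout.
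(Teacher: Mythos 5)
Your overall strategy---reduce the theorem to a Bollob\'as-type statement that depth-$d_{max}$ neighborhood profiles are vertex-distinguishing in random $r$-regular graphs---is the same engine the paper uses (via a lemma excerpted from the Nested-GNN analysis). But the step you yourself flag as ``the delicate part'' is where you point at the wrong invariant, and this matters. The paper does not need the degree augmentation of Equation~\ref{eq:vert_gd_deg} or any ``first cycle closing'' argument for this theorem: it reduces the vertical pooling of Equation~\ref{eq:ver_gd_plain_pool} to a plain sum, so the pair-wise representation for $(v_i,v_j)$ is just $|P^{(V)}_{(v_i,v_j),v_j}|\oplus d(v_i,v_j)$, i.e.\ the number of neighbors of $v_j$ lying at distance $d(v_i,v_j)-1$ from the root. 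A distance-sorted bin count of these cardinalities over all $v_j$ in the $d_{max}$-ball is \emph{exactly} the sequence of edge configurations $C^k_{v_i}$ (the histogram, over nodes at distance $k+1$, of how many edges each sends back to distance $k$), and the cited lemma says precisely that two roots in independently sampled regular graphs disagree on some $C^i$ with $i\le d_{max}$ with probability $1-o(n^{-1})$. Your sphere-size sequence $s_k(v)$ is a strictly weaker invariant (it is the sum of the histogram, not the histogram), and the induced-subgraph degrees $\deg_{\mathcal{G}_v}(w)$ are extra information the paper reserves for the distance-regular examples, not for this theorem. So the correspondence you would need to ``make rigorous'' is actually simpler than you fear, but as written your proposal has not identified it.

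The second gap is the probabilistic bookkeeping. You propose ``a union bound over non-isomorphic $r$-regular graph pairs and over vertices.'' A union bound over graph pairs cannot work: the number of $r$-regular graphs on $n$ vertices is super-exponential, and the theorem does not claim to distinguish \emph{all} non-isomorphic pairs---only almost all pairs under the uniform measure. The correct structure is: fix one vertex $v_i$ of $\mathcal{G}^{(1)}$, apply the $1-o(n^{-1})$ bound to each of the $n$ vertices $v_j$ of $\mathcal{G}^{(2)}$, and union-bound over those $n$ vertices only, concluding that $v_i$'s node-level representation appears nowhere in $\mathcal{G}^{(2)}$ with probability $1-o(1)$; injectivity of $R^{(g)}$ then separates the two graph embeddings. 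Finally, be careful describing the readout as a flat multiset over ordered pairs: the architecture pools per root first (via $R^{(n)}$) and then over roots (via $R^{(g)}$), and it is this hierarchy that lets you attribute each edge-configuration sequence to its root; a flat multiset of pair tuples would not obviously permit the per-root recovery your argument needs.
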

We include the proof of Theorem \ref{therem:gdgnn} in Appendix \ref{sec:proof}. This theorem implies that GDGNN with vertical geodesics is more expressive than basic GNNs (and 1-WL) and can discriminate almost all $r$-regular graphs. Moreover, GDGNN only needs to search a small neighborhood for geodesics to achieve higher expressiveness. This guarantees the good efficiency and performance of GDGNN.

Moreover, when the vertical geodesic pooling layer encodes the subgraph induced by the geodesic nodes (using Equation \ref{eq:vert_gd_deg} for geodesic pooling), GDGNN can distinguish some distance regular graphs. Figure \ref{fig:shrik} shows two distance regular graphs, the Shrikhande graph, and the 4x4 rook's graph. They have the same intersection array and number of nodes. As proved by \citet{degnn}, distance encoding is not able to distinguish this pair of graphs, because nodes in the two graphs will be assigned the same embedding. However, a vertical geodesic can discriminate them. Suppose the black nodes are the target nodes, the brown nodes are one distance away from the black nodes and the light blue nodes are two distances away from the black nodes. Consider the light blue nodes in both graphs, they each have a pair of brown common neighbors with a black node. The brown pair is the vertical geodesics of one light blue node. If we count the number of edges in the subgraph formed by each brown pair, we will see that four out of nine pairs form a one-edge graph (in red) in the Shrikhande graph, while zero out of nine pairs form a one-edge graph in the 4x4 rook's graph. Since Equation \ref{eq:vert_gd_deg} directly captures this and will give four light blue nodes in the Shrikhande graph different pair-wise geodesics with respect to the black node, an injective graph pooling layer will yield different node representations for the two black nodes, and in turn, discriminates the two graphs.

\begin{figure}[t]
         \includegraphics[trim={0cm 12cm 4.5cm 0}, width=\textwidth]{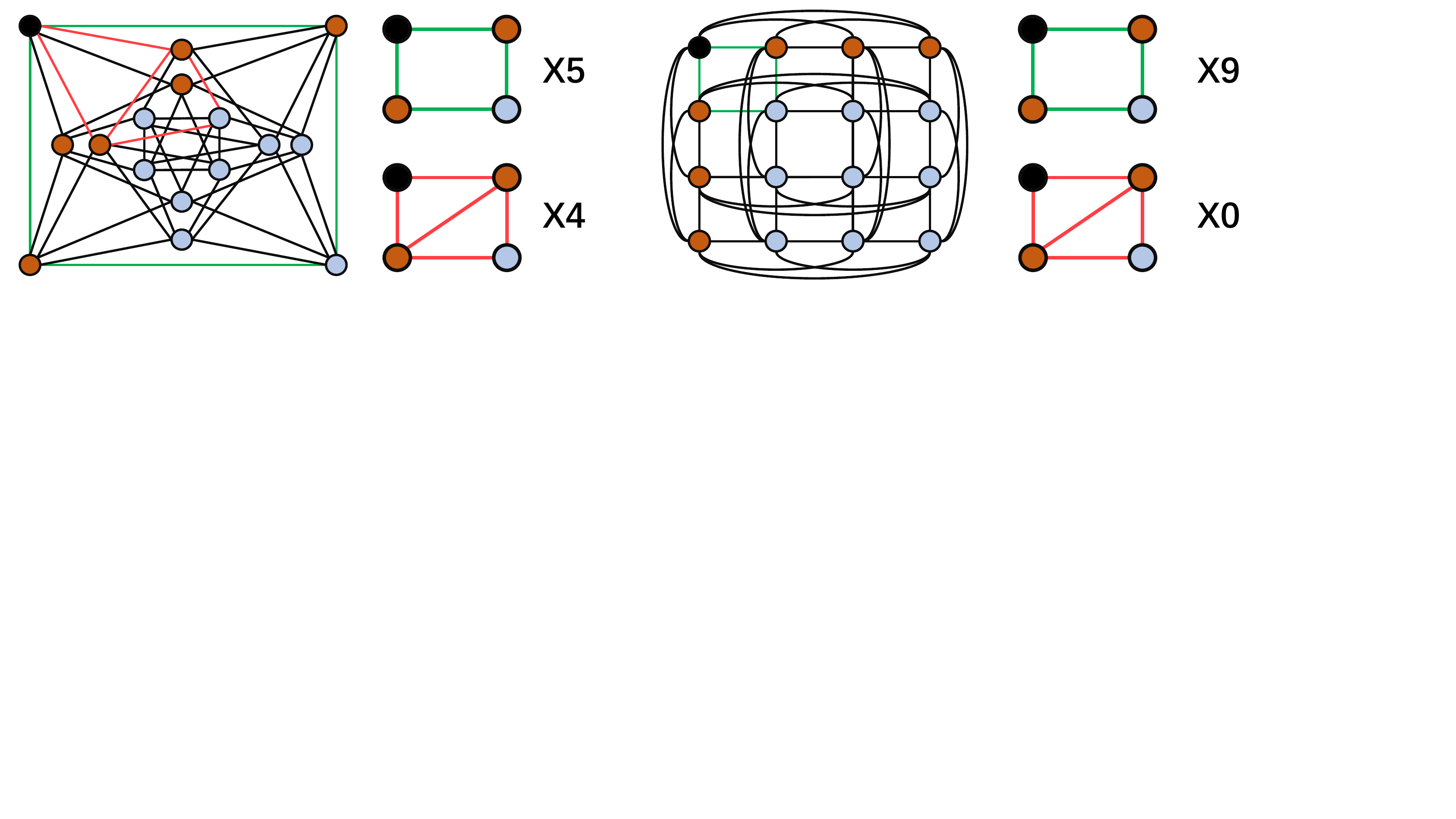}
         \vspace{-15pt}
        \caption{An example of two graphs where GDGNN with vertical geodesics and subgraph degrees can distinguish, while distance encoding cannot.}
        \label{fig:shrik}
        \vspace{-5pt}
\end{figure}
%\subsection{Discussion}
\textbf{Complexity.} The main computation advantage of GDGNN comes from the fact that, for GDGNN, the number of GNN runs stays at a constant of 1, while subgraph-based methods incur GNN runs which increase linearly with respect to the number of queries. Hence, GDGNN maintains the good efficiency of basic GNN methods, while injecting the high-expressiveness of subgraph methods into the model. A detailed complexity analysis on different tasks can be found in Appendix \ref{sec:complexity}.

\textbf{Limitations.} While a simple geodesic pooling layer satisfies the theoretical expressiveness, if we indeed use a GNN or other more complex layer for pooling as suggested in Section \ref{sec:vergd}, the pooling will dominate the good amortized complexity of GDGNN. We see this as a possible future direction where we can use a more complex pooling layer to further improve expressiveness while maintaining efficiency. Like subgraph GNNs, the geodesic extraction process also includes a cutoff distance $d_{max}$ which potentially limits the expressive power of GDGNN. Meanwhile, we compare GDGNN with different cutoff distances in Appendix \ref{sec:cutoff} and show that a relatively small $d_{max}$ suffices to represent the graph structure as indicated in Theorem \ref{therem:gdgnn}.

\section{Related Works}
\textbf{GNNs beyond 1-WL test.} Designing GNNs with stronger representation power is a fundamental task in GNN research. \citet{gin} first proved that the expressiveness of basic GNNs, such as GCN \cite{gcn}, GraphSAGE \cite{graphsage}, and GAT \cite{gat}, is upper-bounded by the 1-WL test. Following works conducted extensive research to break this limit. They can be sorted into several directions: (1) Earlier works proposed higher-order GNNs that are as expressive as the k-WL algorithm, such as k-GNN \cite{kgnn}, RingGNN\cite{ringgnn}, and PPGN\cite{ppgn}. These methods require message passing on node tuples that grow exponentially with respect to k. The increased computation cost makes them not scalable to large graphs. (2) Another type of methods, like GSN \cite{GSN} and MotifNet \cite{motifnet}, incorporates isomorphism counting on certain graph substructures like cycles, triangles, and cliques, similar to earlier kernel-based methods \cite{deepkernel, neuralkernel}.  They essentially augment the models with features undetectable by GNNs. However, a properly-design substructure usually requires expert knowledge which is not available for all data. (3) Some other works use identity or random node labeling within the plain GNN framework. ID-GNN \cite{idgnn} extracts a subgraph around the target node and performs heterogeneous message passing. DE-GNN \cite{degnn} also requires subgraph extraction but labels nodes in the subgraph with distance to the targets. They both have stronger expressiveness than basic GNNs. \citet{rni} shows that GNNs possess universality with random node feature initialization (RNI). Nevertheless, RNI also induces a huge sample space, and GNNs adopting RNI is very hard to train. (4) Some recent works propose to use additional subgraph structures to augment the node embeddings. GraphSNN \cite{snn} incorporates properties of overlapping subgraphs between adjacent nodes to be more expressive than 1-WL. Nested-GNN \cite{ngnn} applies GNN to ego-subgraphs of every node and proves that it is able to discriminate regular graphs. Sharing the same idea of Nested-GNN, GNN-AK\cite{akgin} uses a different subgraph pooling strategy and uses subgraph dropping to improve the model scalability. (5) Another approach learns from both positional and structural representation. \citet{posandstruc} proposes to learn from positional node embeddings unrecognized by the GNN and iteratively update them during message passing. \citet{npisallyouneed} computes node proximity matrix to learn positional of nodes. They can be combined with GDGNN to adapt to position-sensitive tasks.

\textbf{Link prediction using GNNs.} 
Earlier works \cite{vgae,gcn,hvae,compgcn} applied basic GNNs to link prediction using an auto-encoder framework, where the GNN serves as the encoder of the nodes, and edges are decoded by their nodes' encoding vectors. Another line of works, including SEAL \cite{seal}, IGMC~\cite{Zhang2020Inductive}, and GraIL \cite{grail}, use GNNs to encode a labelled subgraph to form the link representation. Later work \cite{equivalance, revisit} reveals that a good node-level/graph-level representation does not necessarily lead to a good edge-level representation. \citet{revisit} systematically show that the labeling trick bridges the gap between node-level and edge-level GNN expressiveness. Sharing the spirits of labeling trick, NBFNet \cite{nbf} extends the generalized Bellman-Ford algorithm to the neural setting and can encode many traditional path-based methods. NBFNet has better scalability when there are multiple links sharing the same node. Some other works also propose to use path information between links to enhance the prediction like horizontal geodesic. \citet{cnnbilstm} uses CNN and BiLSTM to summarize the path information, however, their model requires fixed node embedding, while horizontal geodesic is based on graph structure generated from GNN and hence is inductive. \citet{relmp} learns relational path information by training a set of embeddings for all different types of relational paths, while horizontal GDGNN uses GNN to capture the graph structure and pool node embeddings along the geodesic without training relational-path-specific embeddings.
\section{Experimental Results}
\label{sec:exp}
\subsection{Link Prediction}\label{sec:linkpred}
% \paragraph{Link Prediction}
\textbf{Datasets.} We use several types of datasets for link prediction. (1) Knowledge Graph (KG) inductive link prediction datasets. Inductive link prediction means that the model is trained on a training graph while tested on a different graph. This requires the model to generalize to unseen entities, which is a property preserved by a limited number of models. We follow the standard inductive split of WN18RR\cite{wn18} and FB15K237\cite{fb237} as in \citet{grail}. We rank each positive link against 50 randomly sampled negative links with the same head/tail as the positive link and report the Hit@10 ratio. (2) OGB large-scale link prediction dataset \cite{ogb}, including OGBL-COLLAB and OGBL-PPA. We use the official data split. We rank each positive link against a set of provided negative links in the dataset and report the Hit@50 ratio for the OGBL-COLLAB dataset and the Hit@100 ratio for the OGBL-PPA dataset. (4) Transductive KG results are included in Appendix \ref{sec:kg}. (3) Results of citation datasets and comparison to other distance-based methods can be found in Appendix \ref{sec:cite}.  These datasets vary by graph size and demonstrate GDGNN's capability across scales.

\textbf{Baselines.} We compare GDGNN with Rules Mining methods, DRUM \cite{drum} and NeuralLP \cite{neuralp}, and GNN-based methods, NBFNet \cite{nbf} and GraIL \cite{grail} for the KG tasks. Some other works like \cite{strucaware, indtext} incorporate text data in the KG to achieve better performance. As we believe models that employ textual information is beyond the context of graph structural learning, we exclude them in our comparison. For the OGB dataset, we compare GDGNNs with other methods achieving top places on the OGB link prediction leaderboard, including Deep Walk \cite{deepwalk}, Resource Allocation (RA) \cite{ra}, SEAL, Deeper GCN \cite{deeper}.

\textbf{Implementation Details.}\footnote{The code and data of GDGNN can be found at https://github.com/woodcutter1998/gdgnn.} For link prediction, we use GCN (RGCN for KG) as the basic GNN. For the KG datasets, we search the number of GNN layers in $\{2,3,4,5\}$ and use 64 as hidden dimensions, and the max search distance for geodesic, $d_{max}$, is the same as the number of GNN layers. For the OGB datasets, we search the number of GNN layers in $\{2,3,4\}$ and use 100 as the hidden dimension. We train 50 epochs with a batch size of 64 for the KG datasets, and we train 25 epochs with a batch size of 2048 for the OGB datasets. For OGBL-COLLAB dataset, we use homogeneous labels (\textbf{1} labels) for all nodes. We record the test score using the model with the best set of hyperparameters on validation set. The experiment is repeated 5 times and we take the average.

\textbf{Results and discussion.} We achieved SOTA performance on the inductive datasets. GDGNN is able to significantly improve over rule-mining methods and GraIL. We also outperform NBFNet on most of the datasets. On the OGBL-COLLAB dataset (Table \ref{tab:ogblp}), GDGNN achieved SOTA performance. On the OGBL-PPA dataset, vertical GDGNN can obtain a very competitive result. Horizontal GDGNN does not perform as well possibly because a single path has limited distinguishing power in a graph with high density. On both OGB datasets, GDGNN improves greatly over its basic GNN, GCN. Moreover, as shown in Table \ref{tab:complp}, GDGNN has running-time superiority over methods achieving comparable performance and has a memory advantage to handle large graphs where NBFNet cannot when links do not share common nodes. To make a fair running time comparison, we run all models on 32 CPUs and 1 Nvidia GeForce 1080Ti GPU. We also include ablation studies on all components of GDGNN in Appendix \ref{sec:ablation}.

\begin{table}[!t]
\vspace{-10pt}
\captionsetup{font=small}
    \begin{minipage}{.579\linewidth}
    \centering
      \caption{Link prediction results (\%) on KG.}
      \resizebox{0.98\textwidth}{!}{
\begin{tabular}{@{}lcccccccc@{}}
\toprule
 & \multicolumn{4}{c}{FB15K-237} & \multicolumn{4}{c}{WN18RR} \\ \cmidrule(l){2-9} 
Method & v1 & v2 & v3 & v4 & v1 & v2 & v3 & v4 \\ \midrule
DRUM & $52.9$ & $58.7$ & $52.9$ & $55.9$ & $74.4$ & $68.9$ & $46.2$ & $67.1$ \\
NeuralLP & $52.9$ & $58.7$ & $52.9$ & $55.9$ & $74.4$ & $68.9$ & $46.2$ & $67.1$ \\
GraIL & $64.2$ & $81.2$ & $82.8$ & $89.3$ & $82.5$ & $78.7$ & $58.4$ & $73.4$ \\
NBFNet & $83.4$ & $94.9$ & $95.1$ & $96.0$ & $94.8$ & $\textbf{90.5}$ & $89.3$ & $\textbf{89.0}$ \\ \midrule
GDGNN-Vert & $\textbf{85.4}$ & $95.6$ & $\textbf{97.9}$ & $\textbf{97.8}$ & $93.1$ & $88.9$ & $88.1$ & $85.6$ \\
GDGNN-Hor & $82.1$ & $\textbf{95.8}$ & $94.0$ & $\textbf{97.7}$ & $\textbf{94.9}$ & $87.8$ & $\textbf{89.5}$ & $88.5$ \\ \bottomrule
\end{tabular}\label{tab:indkg}}
    \end{minipage}%
    \begin{minipage}{.421\linewidth}
    \centering
      \caption{Link prediction results (\%) on OGB.}
      \resizebox{0.98\textwidth}{!}{
                 \begin{tabular}{@{}lcc@{}}
\toprule
Method & OGBL-COLLAB & OGBL-PPA  \\ \midrule
GCN & $44.75\pm 1.45$ & $18.67\pm 1.32$ \\
DEEP WALK & $50.37\pm 0.34$ & $28.88\pm 1.63$ \\
RA & - & $\textbf{49.33}\pm 0.00$ \\
DeeperGCN & $52.73\pm 0.47$ & - \\
SEAL & $54.37\pm 0.49$ & $48.80\pm 3.16$ \\ \midrule
GDGNN-Vert & $\textbf{54.74}\pm 0.48$ & $45.92\pm 2.14$ \\
GDGNN-Hor & $54.52\pm 0.72$ & $28.79\pm 3.80$ \\ \bottomrule
\end{tabular}\label{tab:ogblp}}
    \end{minipage}
\vspace{-5pt}
\end{table}
\begin{table}[!t]
\vspace{-10pt}
\captionsetup{font=small}
    \begin{minipage}{.463\linewidth}
    \centering
      \caption{Link prediction running time results.}
      \resizebox{0.98\textwidth}{!}{
                 \begin{tabular}{@{}lcc@{}}
\toprule
Method & OGBL-COLLAB & OGBL-PPA \\ \midrule
GCN & 27 sec & 17 min \\
SEAL & 90 sec & 2 hr 20 min \\
NBFNet & OOM & OOM \\ \midrule
GDGNN-Vert & 36 sec & 33 min \\ \bottomrule
\end{tabular}\label{tab:complp}}
    \end{minipage}%
    \begin{minipage}{.537\linewidth}
    \centering
      \caption{Graph classification running time results.}
      \resizebox{0.98\textwidth}{!}{
                 \begin{tabular}{@{}lcc@{}}
\toprule
Method & OGBG-MOLHIV & OGBG-MOLPCBA \\ \midrule
GIN & 20 sec & 2 min \\
GIN-AK & 70 sec & 12 min \\
Nested-GIN & 149 sec & 16 min \\ \midrule
GDGNN-Vert & 25 sec & 2.5 min \\ \bottomrule
\end{tabular}\label{tab:compgc}}
    \end{minipage}
\vspace{-15pt}
\end{table}

\subsection{Graph Classification}
\textbf{Datasets.} We use two types of datasets for graph classification. (1) TU datasets contain D\&D\cite{ddpro}, MUTAG\cite{mutag}, PROTEINS\cite{ddpro}, PTC\_MR\cite{ptc}. The evaluation metric is Accuracy (\%). We follow the evaluation protocol in \cite{ngnn}, where we use 10-fold cross-validation to compute the score. (2) OGB datasets\cite{ogb}, including OGBG-MOLHIV and OGBG-MOLPCBA. The task of the OGBG-MOLHIV dataset is to predict whether a molecule inhibits the HIV virus. We use Area Under ROC Curve (AUC) to evaluate the OGBG-MOLHIV dataset. OGBG-MOLPCBA dataset has 128 classification tasks. We follow the evaluation protocol provided by the OGB team and use Average Precision (AP) averaged across all classes to evaluate the dataset. (3) Synthetic datasets, including EXP\cite{rni} and CSL\cite{csl}. These datasets contain graphs that can be only distinguished by models with $>$ 1-WL expressiveness. We report accuracy for these datasets.

\textbf{Baselines.} For the TU dataset, we compare GDGNN with another plug-and-play GNN framework, Nested-GNN\cite{ngnn}. Both GDGNN and Nested-GNN use GIN and GCN as their basic GNNs. For the OGB dataset, we use GIN as the basic GNN, we compare GDGNN with methods achieving top places on the OGB leaderboard, including GIN\cite{gin}, GIN-AK \cite{akgin}, LSPE \cite{posandstruc}, Nested GNN \cite{ngnn}, Directional GSN (DGSN) \cite{directional}, PF-GNN \cite{pfgnn}.

\textbf{Implementation details.} We use vertical geodesic in the graph-level tasks because of its theoretical advantage over horizontal GDGNN. For TU datasets, we search the GNN layers and $d_{max}$ in \{2,3,4\}. We train on the TU dataset for 100 epochs with a batch size of 64. For OGB datasets, we search the number of GNN layers in \{2,3,4,5\}, and train for 100 epochs with a batch size of 256. We adopt the virtual node technique \cite{virtualn} to increase the connectivity of graphs for the OGBG-MOLPCBA dataset. For synthetic datasets, we search the number of GNN layers in \{2,3,4\}, and train for 50 epochs with a batch size of 16. For the synthetic datasets, we report GDGNN's performance with different numbers of layers \{2,3,4\}. We train the models for 100 epochs with a batch size of 8.
\begin{table}[!t]
\captionsetup{font=small}
    \begin{minipage}{.543\linewidth}
      \caption{Graph classification results (\%) on TU.}
      \resizebox{0.99\textwidth}{!}{
\begin{tabular}{@{}lcccc@{}}
\toprule
             & D\&D                   & MUTAG                  & PROTEINS               & PTC\_MR                \\ \cmidrule(l){2-5} 
Avg. \#nodes & 284.32                 & 17.93                  & 39.06                  & 14.29                  \\ \midrule
GCN          & $71.6\pm 2.8$          & $73.4\pm 10.8$         & $71.7\pm 4.7$          & $56.4\pm 7.1$          \\
GIN          & $71.6\pm 3.0$          & $74.0\pm 8.8$          & $71.2\pm 5.2$          & $57.0\pm 5.5$          \\
Nested-GCN   & $76.3\pm 3.8$          & $82.9\pm 11.1$         & $73.3\pm 4.0$          & $57.3\pm 7.7$          \\
Nested-GIN   & $\textbf{77.8}\pm 3.9$ & $87.9\pm 8.2$          & $\textbf{73.9}\pm 5.1$ & $54.1\pm 7.7$          \\ \midrule
GDGCN  & $77.6\pm 4.0$          & $88.4\pm 6.6$          & $73.7\pm 3.4$          & $\textbf{60.3}\pm 4.5$ \\
GDGIN  & $\textbf{77.8}\pm 3.6$ & $\textbf{89.4}\pm 7.1$ & $73.6\pm 2.5$          & $57.9\pm 3.4$          \\ \bottomrule
\end{tabular}
\label{tab:tu}}
    \end{minipage}%
    \begin{minipage}{.457\linewidth}
      \caption{Graph classification results (\%) on OGB.}
      \resizebox{0.99\textwidth}{!}{
\begin{tabular}{@{}lcc@{}}
\toprule
Method & OGBG-MOLHIV & OGBG-MOLPCBA \\ \midrule
GIN & $75.58\pm 1.40$ & $27.03\pm 0.23$ \\
LSPE & - & $28.40\pm 0.20$ \\
Nested-GIN & $78.34\pm 1.86$ & $28.32\pm 0.41$ \\
DGSN & $\textbf{80.39}\pm 0.90$ & - \\
GIN-AK & $78.22\pm 0.75$ & $\textbf{29.30}\pm 0.44$ \\
PF-GNN & $80.15\pm 0.68$ & - \\ \midrule
GDGIN & $79.07\pm 1.20$ & $28.59\pm 0.63$ \\ \bottomrule
\end{tabular}
\label{tab:ogbgc}}
    \end{minipage}
\vspace{-15pt}
\end{table}

\textbf{Results and discussions.} On the TU datasets (Table \ref{tab:tu}), GDGNNs consistently outperform the basic GNNs. Compared to Nested-GNN, GDGNN is able to bring a larger performance increase. This shows GDGNN's enhancing power. On the OGB datasets (Table \ref{tab:ogbgc}), we can see that GDGNN's performance is lower than PF-GNN and Directional GSN (so are GIN-AK and Nested-GNN). This might indicate that pairwise conditional information is less effective for graph tasks than for link tasks.
\begin{wraptable}[8]{b}{.25\linewidth}
\vspace{-4pt}
\captionsetup{font=small}
\begin{minipage}{.98\linewidth}
    \centering
      \caption{Synthetic Results.}
      \vspace{-5pt}
      \resizebox{0.98\textwidth}{!}{
                 \begin{tabular}{@{}lll@{}}
\toprule
 & EXP & CSL \\ \midrule
GIN & 50.0 & 10.0 \\
RNI & 99.7 & 16.0 \\
PF-GNN & 100.0 & 100.0 \\
2-GDGNN & 50.0 & 30.0 \\
3-GDGNN & 100.0 & 60.0 \\
4-GDGNN & 100.0 & 100.0 \\ \bottomrule
\end{tabular}\label{tab:synthetic}}
    \end{minipage}%
\vspace{-5pt}
\end{wraptable}
Nevertheless, GDGNN still outperforms the basic GIN a lot, and as GDGNN is a general framework that can be applied to any basic GNN, the performance can be further improved. We notice that GDGNN outperforms LSPE on the OGB-MOLPCBA dataset, which might indicate that the pair-wise relation encoded by GDGNN is sufficient in representing the positional information of a node. On the synthetic data, we achieve comparable results to the universal PF-GNN, which verifies our theory. We include ablation studies of graph-level geodesics in Appendix \ref{sec:ablation}.
\subsection{Node Classification}\label{sec:node-task}
\textbf{Datasets.} We use airport datasets, Brazil-Airport, Europe-Airport, and USA-Airport for our node classification experiments. The task is to predict passenger flow level from the flight traffic network, where graph structure matters and expressive GNNs are useful. Following the setting in \citet{degnn}, we split the dataset with a train/test/valid ratio of 8:1:1 and run experiments with independent random initialization 20 times and report the average accuracy and 95\% confidence range.

\textbf{Baselines.} For GNN baselines, we compare to 1-WL GNNs, including GCN and GIN, and DE-GNN \cite{degnn} that also relies on distance information.
\begin{wraptable}[9]{}{.50\linewidth}
\vspace{-5pt}
\captionsetup{font=small}
\begin{minipage}{.98\linewidth}
    \centering
      \caption{Node classification results (\%).}
      \vspace{-4pt}
      \resizebox{0.98\textwidth}{!}{
                 \begin{tabular}{@{}lccc@{}}
\toprule
Method & Bra.-Airports & Eur.-Airports & USA-Airports \\ \midrule
GCN & $64.55\pm 4.18$ & $52.07\pm 2.79$ & $56.58\pm 1.11$ \\
GIN & $72.83\pm 3.57$ & $53.84\pm 3.94$ & $57.43\pm 2.13$ \\
PhUSION & $72.37\pm 0.00$ & $56.02\pm 0.00$& $63.49\pm 0.00$ \\
Struc2vec & $70.88\pm 4.26$ & $57.94\pm 4.01$ & $61.92\pm 2.61$ \\
DE-GNN & $75.37\pm 3.25$ & $\textbf{58.41}\pm 3.20$ & $64.16\pm 1.70$ \\\midrule
GDGIN & $\textbf{78.61}\pm 2.20$ & $53.97\pm 3.50$ & $\textbf{64.36}\pm 1.62$ \\ \bottomrule
\end{tabular}\label{tab:node-task}}
    \end{minipage}%
% \vspace{-5pt}
\end{wraptable}
We further compare to Struc2vec\cite{s2v}, PhUSION\cite{npisallyouneed} that encodes graph structures without GNN. DE-GNN and PhUSION have several variants and we take their best results here. Like in graph classification, we use vertical geodesics because of its good theoretical properties. We search the GNN layers and $d_{max}$ in \{2,3,4\}. We train on all datasets for 200 epochs with a batch size of 32.

\textbf{Results and discussions.} From Table \ref{tab:node-task}, we can see that GDGNN achieves very competitive results compared to DE-GNN, and significantly improves the performance of basic GIN on two datasets, which experimentally verifies our claim that node-level geodesic is more expressive than plain GNNs. For GDGNN, because the GNN is applied to the graph only once in the entire node classification process, it can maintain a good efficiency comparable to that of the basic GNNs.

\section{Conclusions}
In this paper, we propose a general GNN framework that incorporates geodesic information to generate node, link, and graph representations. Our framework is more powerful than basic GNNs  both practically and theoretically and is more efficient than current more-expressive GNNs as demonstrated by the running-time experiments. Our framework has the potential to make GNNs encoding conditional information really applicable to large-scale real-world graphs.

\textbf{Acknowledgement.} LK and YC are supported by NSF grant CBE-2225809. MZ is supported by NSF China (No.62276003) and CCF-Baidu Open Fund (NO.2021PP15002000).

\bibliographystyle{named}
\bibliography{neurips_2022}

%%%%%%%%%%%%%%%%%%%%%%%%%%%%%%%%%%%%%%%%%%%%%%%%%%%%%%%%%%%%
\section*{Checklist}

\begin{enumerate}

\item For all authors...
\begin{enumerate}
  \item Do the main claims made in the abstract and introduction accurately reflect the paper's contributions and scope?
    \answerYes{We faithfully summarize the paper in the abstract.}
  \item Did you describe the limitations of your work?
    \answerYes{As described in the Section \ref{sec:exp} and Appendix \ref{sec:kg}.}
  \item Did you discuss any potential negative societal impacts of your work?
    \answerNo{Our work only involves existing datasets and solves general graph learning problem without negative societal impact.}
  \item Have you read the ethics review guidelines and ensured that your paper conforms to them?
    \answerYes{}
\end{enumerate}

\item If you are including theoretical results...
\begin{enumerate}
  \item Did you state the full set of assumptions of all theoretical results?
    \answerYes{Assumption is described in the Theorem \ref{therem:gdgnn}}
        \item Did you include complete proofs of all theoretical results?
    \answerYes{In Appendix \ref{sec:proof}}
\end{enumerate}

\item If you ran experiments...
\begin{enumerate}
  \item Did you include the code, data, and instructions needed to reproduce the main experimental results (either in the supplemental material or as a URL)?
    \answerYes{}
  \item Did you specify all the training details (e.g., data splits, hyperparameters, how they were chosen)?
    \answerYes{As described in the implementation details of Section \ref{sec:exp}}
        \item Did you report error bars (e.g., with respect to the random seed after running experiments multiple times)?
    \answerYes{As shown in the experiment result tables.}
        \item Did you include the total amount of compute and the type of resources used (e.g., type of GPUs, internal cluster, or cloud provider)?
    \answerYes{In the discussion of Section \ref{sec:exp} }
\end{enumerate}

\item If you are using existing assets (e.g., code, data, models) or curating/releasing new assets...
\begin{enumerate}
  \item If your work uses existing assets, did you cite the creators?
    \answerYes{The datasets and model source code repositories are all cited.}
  \item Did you mention the license of the assets?
    \answerNA{}
  \item Did you include any new assets either in the supplemental material or as a URL?
    \answerNo{}
  \item Did you discuss whether and how consent was obtained from people whose data you're using/curating?
    \answerNo{The datasets and source code are open source information allowed for research use.}
  \item Did you discuss whether the data you are using/curating contains personally identifiable information or offensive content?
    \answerNo{The data we are using is known to be de-identified.}
    \end{enumerate}

\item If you used crowdsourcing or conducted research with human subjects...
\begin{enumerate}
  \item Did you include the full text of instructions given to participants and screenshots, if applicable?
    \answerNA{}
  \item Did you describe any potential participant risks, with links to Institutional Review Board (IRB) approvals, if applicable?
    \answerNA{}
  \item Did you include the estimated hourly wage paid to participants and the total amount spent on participant compensation?
    \answerNA{}
\end{enumerate}

\end{enumerate}

%%%%%%%%%%%%%%%%%%%%%%%%%%%%%%%%%%%%%%%%%%%%%%%%%%%%%%%%%%%%
\newpage

\appendix

\section*{Appendix}

\section{Proof of Theorem \ref{therem:gdgnn}}\label{sec:proof}
The proof of Theorem \ref{therem:gdgnn} generally follows the proof of Nested-GNN's expressiveness\cite{ngnn}. Let $Q_{v,\mathcal{G}}^k$ be the set of nodes in $\mathcal{G}$ that are exactly $k$ distance away from node $v$. We can have the following definition,
\begin{definition}
The edge configuration bewteen $Q_{v,\mathcal{G}}^k$ and $Q_{v,\mathcal{G}}^{k+1}$ is a list $C_{v,\mathcal{G}}^k=(a_{v,\mathcal{G}}^{1,k}, a_{v,\mathcal{G}}^{2,k},...)$ where $a_{v,\mathcal{G}}^{i,k}$ denotes the number of nodes in $Q_{v,\mathcal{G}}^{k+1}$ of which each has exactly $i$ edges from $Q_{v,\mathcal{G}}^k$.
\end{definition}
The proof leverages the fact that nodes in randomly sampled regular graphs are very likely to have different edge configurations within their small neighborhood, and GDGNN directly captures this distinction to discriminate the graphs. Formally, we have the following excerpted Lemma from \cite{ngnn},
\begin{lemma}
For two graphs $\mathcal{G}^{(1)}=(\mathcal{V}^{(1)},\mathcal{E}^{(1)})$ and $\mathcal{G}^{(2)}=(\mathcal{V}^{(2)},\mathcal{E}^{(2)})$ that are uniformly independently sampled from all n-node r-regular graphs, where $3\leq r < \sqrt{2\log n}$, we pick any two nodes, each from one graph, denoted by $v_1$ and $v_2$ respectively. Then there is at least one $i\in (\frac{1}{2}\frac{\log n}{\log(r-1-\epsilon)},(\frac{1}{2}+\epsilon)\frac{\log n}{\log (r-1-\epsilon)})$ with probability $1-o(n^{-1})$ such that $C_{v_1,\mathcal{G}^{(1)}}^i\neq C_{v_2,\mathcal{G}^{(2)}}^i$.\label{lemma:edge_config}
\end{lemma}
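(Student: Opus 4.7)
The plan is to use the classical machinery of random regular graph enumeration, specifically the configuration (pairing) model of Bollob\'as together with a second-moment / anti-concentration argument on the edge configurations at a critical radius. The threshold $\frac{1}{2}\log n / \log(r-1-\epsilon)$ is exactly the radius at which a breadth-first exploration from a root node has exposed on the order of $n^{1/2}$ vertices, so we are working at the scale where the birthday paradox starts to force collisions between previously-seen vertices, and this is precisely the regime where $C^{i}_{v,\mathcal{G}}$ encodes nontrivial global information about the random instance.

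First, I would sample $\mathcal{G}^{(1)}$ and $\mathcal{G}^{(2)}$ via the configuration model, so that a breadth-first search from $v_1$ (resp.\ $v_2$) can be analyzed as a sequential pairing process: at each step we match one unpaired half-edge on the current frontier to a uniformly chosen unpaired half-edge. Standard concentration (Azuma on the pairing martingale, as in Bollob\'as and Wormald) then gives that with probability $1-o(n^{-1})$ the frontier sizes $|Q^{i}_{v,\mathcal{G}}|$ satisfy $(r-1)^{i}(1-o(1)) \leq |Q^{i}_{v,\mathcal{G}}| \leq (r-1)^{i}(1+o(1))$ for every $i$ up to the top of the claimed range, and moreover the expected number of ``collisions'' (half-edges pairing back to already-explored vertices) in this range is $O((r-1)^{2i}/n) = o(1)$ thanks to the hypothesis $r < \sqrt{2\log n}$. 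So both neighborhoods are essentially tree-like up to the critical depth.

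Next, I would analyze the edge-configuration vector $C^{i}_{v,\mathcal{G}}$ in the narrow window $i \in (\tfrac{1}{2}\log n / \log(r-1-\epsilon),\; (\tfrac{1}{2}+\epsilon)\log n / \log(r-1-\epsilon))$. In this window $|Q^{i+1}|$ is at least $n^{1/2+\Omega(\epsilon)}$. Conditioning on the BFS exploration up to depth $i$, each half-edge hanging off $Q^{i+1}$ is paired to a uniformly random remaining half-edge; the number of ``back-edges'' (pairings that land in $Q^{i}$) for a given depth-$(i+1)$ vertex is therefore a sum of nearly-independent Bernoullis, and the entries $a^{j,i}_{v,\mathcal{G}}$ of the configuration are approximately multinomial with a number of trials growing polynomially in $n$. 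A standard anti-concentration bound (local central limit theorem / Berry-Esseen for the multinomial) then shows that no single value of $C^{i}_{v,\mathcal{G}}$ is attained with probability exceeding $n^{-1-\Omega(\epsilon)}$, and summing over the $\Theta(\log n)$ admissible values of $i$ gives that the probability $C^{i}_{v_1,\mathcal{G}^{(1)}} = C^{i}_{v_2,\mathcal{G}^{(2)}}$ for every $i$ in the window is bounded by $n^{-1-\Omega(\epsilon)} \cdot O(\log n) = o(n^{-1})$, since $\mathcal{G}^{(1)}$ and $\mathcal{G}^{(2)}$ are sampled independently. Finally I would convert from the configuration model to the uniform regular-graph model using the standard contiguity fact that any event of configuration-model probability $o(n^{-1})$ has uniform-model probability $o(n^{-1})$ as well (the conditioning factor for simplicity is $\Theta(1)$ when $r$ is bounded, and only $e^{O(r^{2})} = n^{o(1)}$ under the hypothesis $r < \sqrt{2\log n}$).

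The main obstacle will be the anti-concentration step: it is not enough to show that many configurations are \emph{possible}, one must show that no configuration is attained with probability much larger than $n^{-1}$. The tree-likeness ensures that the entries of $C^{i}$ are close to sums of independent Bernoullis with non-degenerate variance, so a local CLT or Berry--Esseen argument applies, but carrying this out rigorously requires bookkeeping the conditional dependencies introduced by the pairing process and verifying that the effective variance is $\Omega((r-1)^{i+1})$ rather than collapsing. Controlling this variance, together with patching the configuration-to-uniform contiguity precisely under the mild growth condition on $r$, is where the technical work concentrates.
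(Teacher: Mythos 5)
This lemma is not proved in the paper at all: it is explicitly imported ("the following excerpted Lemma from~\cite{ngnn}"), and it ultimately traces back to Bollob\'as's 1982 result on distinguishing vertices of random regular graphs. So there is no in-paper proof to compare against; your sketch follows the same general route as the original source (configuration model, BFS exploration, anti-concentration of the edge configurations in the critical window), which is the right family of ideas.

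However, the sketch has a genuine internal inconsistency at its core. You claim the expected number of collisions up to the top of the window is $O((r-1)^{2i}/n)=o(1)$, so that both neighborhoods are "essentially tree-like"; but at $i=(\tfrac12+\epsilon)\log n/\log(r-1-\epsilon)$ one has $(r-1)^{2i}/n\geq n^{2\epsilon}\to\infty$. This is not a minor slip: if the balls really were trees up to depth $i+1$, then $C^i_{v,\mathcal{G}}$ would be the deterministic all-ones configuration for every vertex and the lemma would be \emph{false} at that depth. The distinguishing power comes precisely from the $\Theta((r-1)^{2i}/n)$ collisions, and the window is chosen so that this quantity passes from $O(1)$ to $n^{2\epsilon}$. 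Relatedly, your single-level anti-concentration claim (no configuration value attained with probability exceeding $n^{-1-\Omega(\epsilon)}$) is too strong: the randomness of $C^i$ at one level is essentially that of a Poisson-type collision count with mean $\lambda_i=\Theta((r-1)^{2i}/n)\leq n^{2\epsilon}$, whose mode has probability $\Theta(\lambda_i^{-1/2})\geq n^{-\epsilon}$ --- nowhere near $n^{-1}$ for small $\epsilon$. The correct argument must \emph{multiply} conditional anti-concentration bounds over the $\Theta(\epsilon\log n/\log(r-1-\epsilon))$ levels of the window, exploiting the fresh pairing randomness exposed at each level; this is exactly why the window needs to contain many depths. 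Your final step instead \emph{sums} over levels, which bounds the probability of agreement at \emph{some} level rather than at \emph{all} levels --- the wrong direction for the event in question. Finally, the contiguity factor between the configuration model and the uniform model is $e^{\Theta(r^2)}$, which under the hypothesis $r<\sqrt{2\log n}$ can be as large as $n^{\Theta(1)}$ (roughly $n^{1/2}$), not $n^{o(1)}$; so an event of configuration-model probability $o(n^{-1})$ is not automatically $o(n^{-1})$ in the uniform model, and the transfer needs a sharper argument near the top of the admissible range of $r$.
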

\begin{proof}
To prove Theorem \ref{therem:gdgnn}, we first assume, without loss of generality, that the hidden dimension of GDGNN is one. We know plain message passing GNN will generate the same embeddings for every node in the regular graph because all nodes have the same degree/subtree. Hence, we can normalize all the node representations to a constant of one. For simplicity, we also reduce vertical geodesic pooling function $R_{gd}^{(V)}$ in Equation \ref{eq:ver_gd_plain_pool} to a simple sum of the representations of the geodesic nodes. 

Let $d_{max}=\lceil(\frac{1}{2}+\epsilon)\frac{\log n}{\log (r-1-\epsilon)}\rceil$, the maximum distance in which GDGNN searches for a target node's neighbors. We then consider node $v_i$ and it's $d_{max}$-hop rooted subgraph $\mathcal{G}_{v_i}^{d_{max}}$, the vertical geodesic pooling process is essentially counting the number of geodesic nodes of each node $v_j$ in $\mathcal{G}_{v_i}$. And the pair-wise vertical geodesic between the $v_i$ and $v_j$ becomes,
\begin{equation}
    \bm{g}_{v_i,v_j}^{(V)}=|\{P_{v_j}^{(V)}\}|\oplus d(v_i,v_j)
\end{equation}
where $d(v_i,v_j)$ is the distance between $v_i$ and $v_j$, and hence $v_j\in Q_{v_i, \mathcal{G}_{v_i}}^{d(v_i,v_j)}$ and $\bm{g}_{v_i,v_j}^{(V)}$ records the number of edges of $v_j$ that connects to nodes in $Q_{v_i, \mathcal{G}_{v_i}}^{d(v_i,v_j)-1}$, and its distance to $v_i$. We also have the node-level pooling function to be a distance sorted bin count function on the set of pair-wise vertical geodesic representations. It first sorts the geodesic representations by their second dimension (distance) and performs bin count on their first dimension (number of geodesic nodes) separately for each distance. Then GDGNN produces an mapping of $\mathcal{G}_{v_i}$ to its exact set of edge configurations, $\mathcal{C}_{v_i}=(C_{v_i,\mathcal{G}_{v_i}}^1,C_{v_i,\mathcal{G}_{v_i}}^2,...)$.

Let $\mathcal{G}^{(1)}=(\mathcal{V}^{(1)},\mathcal{E}^{(1)})$ and $\mathcal{G}^{(2)}=(\mathcal{V}^{(2)},\mathcal{E}^{(2)})$ be two graphs uniformly independently sampled from all n-node r-regular graphs. We consider two nodes $v_i\in \mathcal{V}^{(1)}$ and $v_j \in \mathcal{V}^{(2)}$. Because $d_{max}=\lceil(\frac{1}{2}+\epsilon)\frac{\log n}{\log (r-1-\epsilon)}\rceil$, the sets of edge configurations of the rooted subgraphs, $\mathcal{C}_{v_i}$ and $\mathcal{C}_{v_j}$ are different with probability $1-o(n^{-1})$, according to Lemma \ref{lemma:edge_config}. In such case, GDGNN also maps their rooted subgraphs to different edge configurations, and hence the node-level geodesic representation $\bm{z}_{v_i}^{(n)}$ and $\bm{z}_{v_j}^{(n)}$ generated by GN-GNN will be different with the same probability.

Consider $v_i$ and all node $v_j\in \mathcal{V}^{(2)}$, by union bound, $\bm{z}_{v_i}\not\in \{\bm{z}_{v_j}|v_j\in \mathcal{V}^{(2)}\}$ with a probability of $1-o(1)$.
\end{proof} 

\section{Complexity Analysis}\label{sec:complexity}
The complexity of GDGNN differs by task. On almost all tasks, GDGNN demonstrates superiority in efficiency. The complexity of GDGNN can be divided into two parts, the geodesic extraction part, and the GNN part. We first note that the complexity of geodesic extraction and GNN cannot be directly compared as their computation units are different, geodesic extraction operates on unit integers while GNN operates on $d$-dimensional vectors that are potentially very large, and running a geodesic extraction on a graph takes significantly less amount of time compared to running a GNN on the same graph. Meanwhile, we can always pre-compute distance for fast geodesics extraction during inference. Hence, we present a complexity analysis of GDGNN on the GNN part.

\subsection{Graph and node level tasks}
The analysis of graph and node level tasks is similar as graph representation is generated by applying mean/max pooling on all node representations. Specifically, consider the problem where we need to infer $k$ nodes in a graph with $|V|$ nodes and $|E|$ edges. In worst-case, the complexity of a $T$-layer subgraph-based GNN is $O(kT|E|)$, because it applies a $T$-layer GNN on $k$ nodes' subgraphs, where each subgraph contains $|E|$ edges (the full graph). GDGNN's worst-case complexity is $O(T|E|+k|E|)$, $O(T|E|)$ is for applying the GNN once to the full graph, and the number of nodes in a vertical geodesic is bounded by the number of edges and the pooling takes $O(k|E|)$. Hence, GDGNN is more efficient than subgraph GNNs. For graphs, the comparison becomes $O(|V|T|E|)$ versus $O(T|E|+|V||E|)=O(|V||E|)$, this shows that GDGNN helps amortize the expensive computation cost, $T$ versus $1$, of applying a GNN.

\subsection{Edge level tasks}
For edge-level tasks, we consider the worst-case scenario where we try to predict $k$ links on a graph with $|V|$ nodes and $|E|$ edges. The $k$ links do not share any common nodes. GDGNN takes $O(T|E|+k|V|)$ for $O(T|E|)$ GNN on the full graph and $k$ geodesic pooling, where the number of nodes in a link-level geodesic is bounded by the number of nodes $O(|V|)$ in the graph (for both vertical and horizontal geodesics). Subgraph GNN methods take $O(kT|E|)$ for applying $T$-layer GNN onto $k$ links' subgraphs (worst case $O(|E|)$ edges). NBFNet \cite{nbf} shares the same complexity because the links do not have common nodes, and the results of one GNN run can not be shared across links as suggested by the authors of NBFNet. Thus, $k$ runs of GNN are necessary for the NBFNet. Then, comparing the complexities of GDGNN and subgraph GNNs, when we have fewer query links and hence $O(T|E|)>O(k|V|)$, the subgraph method's complexity $O(kT|E|)$ grows linearly with respect to the number of queries, while GDGNN's complexity $O(T|E|)$ does not. When we have more query links and $O(T|E|)<O(k|V|)$, GDGNN's complexity $O(k|V|)$ is more optimal than subgraph GNNs' $O(kT|E|)$ complexity.

\section{Transductive Knowledge Graph link prediction}\label{sec:kg}
 For transductive KG link prediction, we follow the inductive setting as in Section \ref{sec:linkpred}. Following \cite{transe}, we rank a positive link against all negative links that have the same head (or tail) as the positive links. We additionally compared to general knowledge graph link prediction methods that are not inductive, including TransE \cite{transe}, RotatE \cite{rotate}, and RGCN \cite{rgcn}.  We report H@N and Mean Reciprocal Rank (MRR) for the transductive setting.
\begin{table}[h]
\centering
\caption{Knowledge graph transductive link prediction results.}
\begin{tabular}{@{}lcccccccc@{}}
\toprule
\multirow{2}{*}{Method} & \multicolumn{4}{c}{FB15K-237} & \multicolumn{4}{c}{WN18RR} \\ \cmidrule(l){2-9} 
 & MRR & H@1 & H@3 & H@10 & MRR & H@1 & H@3 & H@10 \\ \midrule
DRUM & $34.3$ & $25.5$ & $37.8$ & $51.6$ & $48.6$ & $42.5$ & $51.3$ & $58.6$ \\
NeuralLP & $24.0$ & - & - & $36.2$ & $43.5$ & $37.1$ & $43.4$ & $56.6$ \\
TransE & $29.4$ & - & - & $46.5$ & $22.6$ & - & - & $50.1$ \\
RotatE & $33.8$ & $24.1$ & $37.5$ & $55.3$ & $47.6$ & $42.8$ & $49.2$ & $57.1$ \\
RGCN & $27.3$ & $18.2$ & $30.3$ & $45.6$ & $40.2$ & $34.5$ & $43.7$ & $49.4$ \\
NBFNet & $\textbf{41.5}$ & $\textbf{32.1}$ & $\textbf{45.4}$ & $\textbf{59.9}$ & $\textbf{55.1}$ & $\textbf{49.7}$ & $\textbf{57.3}$ & $\textbf{66.6}$ \\ \midrule
GDGNN-Vert & $23.2$ & $15.8$ & $26.4$ & $45.1$ & $46.2$ & $39.3$ & $48.6$ & $59.1$ \\
GDGNN-Hor & $25.1$ & $16.2$ & $28.7$ & $44.9$ & $43.2$ & $35.2$ & $47.2$ & $58.0$ \\ \bottomrule
\end{tabular}
\end{table}

We acknowledge that GDGNN does not perform as well in the transductive setting as in the inductive setting. However, GDGNN is still able to greatly improve the performance of RGCN on the WN18RR dataset, and RGCN requires trainable node embeddings while GDGNN does not. We suspect that the performance discrepancy between the homogeneous setting and the knowledge graph setting is due to the increased number of edge types. WN18RR dataset has 11 types of relations, and FB15K237 has 237. Note that GNN methods like NBFNet\cite{nbf} train a set of weights for every target relation type, meaning that their message passing is conditioned on the relation, while GDGNN only uses one set of weights. We notice that GDGNN is able to outperform some embedding methods on the WN18RR dataset, while not performing as good on FB15K237, possibly due to the fact that the FB15K237 dataset contains much more relations than the WN18RR dataset. This aligns with our hypothesis. A potential solution is to also train multiple GDGNNs that each handle one target relation type, note that we still only need to run each relation-specific GDGNN once and keep the good amortized property of GDGNN. We leave this to future work.

\section{More link prediction results}\label{sec:cite}
Citation datasets\cite{homodata} include Cora, Citeseer, and Pubmed datasets. We follow the experimental setting in \cite{nbf} and use 85:5:10 for the train/valid/test links split. The evaluation metric is ROC-AUC (AUC) and Average Precision (AP) scores. We compare GDGNN with popular link prediction methods VGAE \cite{vgae}, SEAL\cite{seal}, NBFNet\cite{nbf} for the citation datasets. Note that for citation data, we do not use the pre-trained node embeddings from the publications' content, incorporating such information only increases the power of GDGNN.

We use GCN as the basic GNN. We search the number of GNN layers in $\{2,3,4,5\}$, and the max search distance for geodesic, $d_{max}$, is the same as the GNN layers. We use 32 hidden dimensions for all fully-connected layers in the model. We train 100 epochs with a batch size of 64 for the citation datasets.

\begin{table}[!htb]
\vspace{-10pt}
\centering
\caption{Link prediction results (\%) on Citation dataset.}
\begin{tabular}{@{}lcccccc@{}}
\toprule
\multirow{2}{*}{Method} & \multicolumn{2}{c}{Cora} & \multicolumn{2}{c}{Citeseer} & \multicolumn{2}{c}{PubMed} \\ \cmidrule(l){2-7} 
 & AUC & AP & AUC & AP & AUC & AP \\ \midrule
VGAE & $91.40$ & $92.60$ & $90.80$ & $92.00$ & $94.40$ & $94.70$ \\
SEAL & $93.32$ & $94.21$ & $90.52$ & $92.43$ & $97.78$ & $97.90$ \\
NBFNet & $\textbf{95.61}$ & $\textbf{96.17}$ & $\textbf{92.28}$ & $93.74$ & $\textbf{98.30}$ & $\textbf{98.15}$ \\ \midrule
GDGNN-Vert & $94.47$ & $95.71$ & $91.98$ & $\textbf{94.01}$ & $98.16$ & $\textbf{98.17}$ \\
GDGNN-Hor & $94.56$ & $95.48$ & $92.06$ & $93.59$ & $97.83$ & $98.10$ \\ \bottomrule
\end{tabular}\label{tab:cite}
\end{table}
On the citation dataset (Table \ref{tab:cite}), GDGNN achieved very competitive results compared to the SOTA method NBFNet, and GDGNN is more efficient than NBFNet as demonstrated in Table \ref{tab:complp}.

We also include C.elegans, NS, and PB datasets adopted by DE-GNN \cite{degnn}. We follow its experiment setup to split the edges into 8:1:1 train/test/valid splits and repeat the experiment 20 times to report the average AUC and 95\% confidence range. For these datasets, we compare with other methods that also rely on distance encoding, including DE-GNN, PGNN\cite{pgnn}, SEAL, and the basis GNN, GIN.

\begin{table}[h]
\centering
\caption{Link prediction results compared to other distance-related methods.}\label{tab:delink}
\begin{tabular}{@{}lccc@{}}
\toprule
Method & C.elegans & NS & PB \\ \midrule
GIN & $75.58\pm0.59$ & $87.75\pm0.56$ & $91.11\pm0.52$ \\
PGNN & $78.20\pm0.33$ & $94.88\pm0.77$ & $89.72\pm0.32$ \\
SEAL & $88.26\pm0.56$ & $98.55\pm0.32$ & $94.18\pm0.57$ \\
DE-GNN & $\textbf{90.05}\pm0.26$ & $\textbf{99.43}\pm0.63$ & $94.95\pm0.37$ \\\midrule
GDGNN-Vert & $87.88\pm0.42$ & $98.10\pm0.26$ & $94.43\pm0.39$ \\
GDGNN-Hor & $89.83\pm0.70$ & $98.65\pm0.48$ & $\textbf{96.14}\pm0.73$ \\ \bottomrule
\end{tabular}
\end{table}
From the experiment, we can see that GDGNN achieves very competitive results compared to distance-encoding GNN and SEAL, and outperforms PGNN\cite{pgnn} significantly on all datasets. We suspect the key reason is that PGNN relies on relative positions to the anchors, but the anchors that PGNN randomly chooses are not necessarily representative to all links. In contrast, the geodesic information is directly associated with each link.

\section{Ablation study}\label{sec:ablation}
GDGNN has many components, and here we present an ablation study of GDGNN to demonstrate the effect of each component. For link prediction datasets, we consider 4 settings, plain GCN (GCN), GCN and distance between the two target nodes of the link (GCN+Dist), GDGNN-Vertical without the geodesic degree (GDGNN-Vert), and GDGNN-Vertical with geodesics node degree (GDGNN-Vert-Deg). Note that the results we report in Section \ref{sec:exp} include geodesic node degree.
\begin{table}[h]
\centering
\caption{Ablation study on link prediction datasets.}
\begin{tabular}{@{}lccc@{}}
\toprule
Method & Cora (AUC) & OGBL-COLLAB (H@50) & OGBL-PPA (H@100) \\ \midrule
GCN & 81.79 & 44.75 & 18.67 \\
GCN+DIST & 92.93 & 53.82 & 20.39 \\
GDGNN-Vert & 94.14 & 54.38 & 43.86 \\
GDGNN-Vert-Deg & 94.56 & 54.74 & 45.92 \\ \bottomrule
\end{tabular}\label{tab:abllp}
\end{table}

From the experiment results (Table \ref{tab:abllp}), we can see that distance is already able to assist link prediction, especially on sparser datasets like Cora and OGBL-COLLAB. However, on OGBL-PPA, where the connectivity of the graph is much larger, distance does not help, whereas geodesic representation is able to significantly improve the expressive power of basic GNNs. Also, when the graph is sparse, it is rather unlikely for the geodesic nodes to form subgraphs, meaning that most of the nodes on the vertical geodesic have a degree of zero, and hence geodesic degree does not improve much. Meanwhile, in OBGL-PPA, geodesic degrees are able to increase the performance of GDGNN, which aligns with the intuition and example we present.

For horizontal geodesics, we present results where only part of the horizontal geodesic is used.

\begin{table}[h]
\centering
\caption{Partial horizontal geodesic results.}
\begin{tabular}{@{}lcc@{}}
\toprule
Method & Cora & OGBL-COLLAB \\ \midrule
4-GDGNN-Hor & 94.40 & 54.21 \\
4-GDGNN-Partial & 92.73 & 53.17 \\
5-GDGNN-Hor & 94.37 & 53.84 \\
5-GDGNN-Partial & 92.68 & 53.29 \\ \bottomrule
\end{tabular}
\end{table}
Partial represents geodesics where only the head/tail, the head/tail's direct neighbors on the shortest path are used to form the geodesic. N-GDGNN means the cutoff distance is $N$. We do not compare with 2-3 distances because Partial and horizontal geodesic will be exactly the same in that case. We add distance to both Partial and horizontal geodesics to make a fair comparison. We can see that without the full horizontal geodesic, the performance on the Cora dataset indeed dropped by ~1.5\%. While the difference is not as significant, we still see that horizontal geodesics outperforms Partial on the OGBL-COLLAB dataset. This shows that while being connected within some distance is already a good indicator of the likelihood of the link, incorporating the node structure information can still improve the representation of the link.

For graph classification tasks, we study two simpler versions of GDGNN. We reduce the vertical pair-wise geodesic function (Equation \ref{eq:ver_gd_plain_pool}) to be a pooling function only on the embeddings of the pair of nodes. Then, the node-level geodesic is essentially combining the node embedding and its k-hop neighbor embedding, we refer to this as Nei. Another version is we append the pair-wise distance to the neighbor embeddings, we refer to this as Dist. Note that vertical geodesic takes part of the direct neighbors of a neighbor node, we also study the variant where all neighbors are used as the geodesic, we call this FullGDNei.
\begin{table}[h]
\centering
\caption{Ablation study on graph classification datasets.}
\begin{tabular}{@{}lccc@{}}
\toprule
Method & MUTAG (Accuracy) & PROTEINS (Accuracy) & OGBG-MOLHIV (AUC) \\ \midrule
GIN & 74.0 & 71.2 & 75.5 \\
GIN+Nei & 81.2 & 71.9 & 76.2 \\
GIN+Nei+Dist & 88.1 & 71.8 & 76.0 \\
GIN+FullGDNei+Dist & 88.1 & 71.9 & 76.4 \\
GDGNN-Vert & 89.0 & 73.3 & 77.9 \\
GDGNN-Vert-Deg & 89.4 & 73.6 & 79.1 \\ \bottomrule
\end{tabular}\label{tab:ablgc}
\end{table}

From Table \ref{tab:ablgc}, we can see that the neighbor embedding alone is not very effective, and we need to assist it with pair-wise distance, an explicit form of geodesic to improve the performance. By incorporating the geodesic representation, we are able to consistently improve over the 'Nei+Dist' version, which aligns with Theorem \ref{therem:gdgnn}. Geodesic degrees do not give significant improvement on the TU dataset possibly due to its high testing variance, but we can still see a notable improvement in the OGBG-MOLHIV dataset. For FullGDNei, we see that it does not bring much performance increase onto GIN+Nei+Dist compared to GDGNN-Vert. This shows that FullGDNei essentially weighs in as an extra layer of GNN which can be covered by hyperparameter search.

\section{Impact of different cutoff distances}\label{sec:cutoff}
A non-negligible hyperparameter in our model is the cutoff distance. In this section, we present results on the impact of different cutoff distances.

\begin{table}[h]
\centering
\caption{Performance on link and graph datasets with different cutoff distances (\%).}\label{tab:cutoff}
\begin{tabular}{@{}lccc@{}}
\toprule
Method & Cora(L) & OGBL-PPA(L) & OGBG-MOLHIV(G) \\ \midrule
1-GDGNN & 82.46 & 21.15 & 75.68 \\
2-GDGNN & 91.58 & 43.76 & 78.13 \\
3-GDGNN & 93.62 & 45.92 & 79.07 \\
4-GDGNN & 94.47 & 44.82 & 78.84 \\ \bottomrule
\end{tabular}
\end{table}
(L) represents the link prediction task, and (G) represents the graph classification task. N-GDGNN means GNN with different max-cutoff distances, we use vertical geodesics with 3-layers of base GNN. From Table \ref{tab:cutoff}, we see that for link prediction tasks (Cora in particular), as the cutoff distance increases, the performance increases, this is because more links can be connected by geodesic extraction, and distance itself is already a good indicator of the likelihood of the link. However, disconnected nodes are not always predicted as negative. When disconnected, the link will have zero geodesic representation, but \textbf{still have meaningful node representations from the base GNN}. In such a case, GDGNN degenerates embeddings similar to models like VGAE, which is still able to statistically learn the probability of a link based on the node structure around the two nodes of the link. The choice of cutoff distance is data-dependent, as we can see in the results of OGBL-PPA and OGBG-MOLHIV, 4-GDGNN is worse than 3-GDGNN, and the actual number can be determined by hyperparameter tuning. In general, cutoff distance resembles the max number of hops in the subgraph extraction process.
\section{Limitation of Horizontal Geodesics}\label{sec:horlim}
\begin{wrapfigure}[7]{r}{0.16\textwidth}
  \vspace{-15pt}
  \begin{center}
    \includegraphics[trim={0cm 2cm 17cm 0}, width=0.15\textwidth]{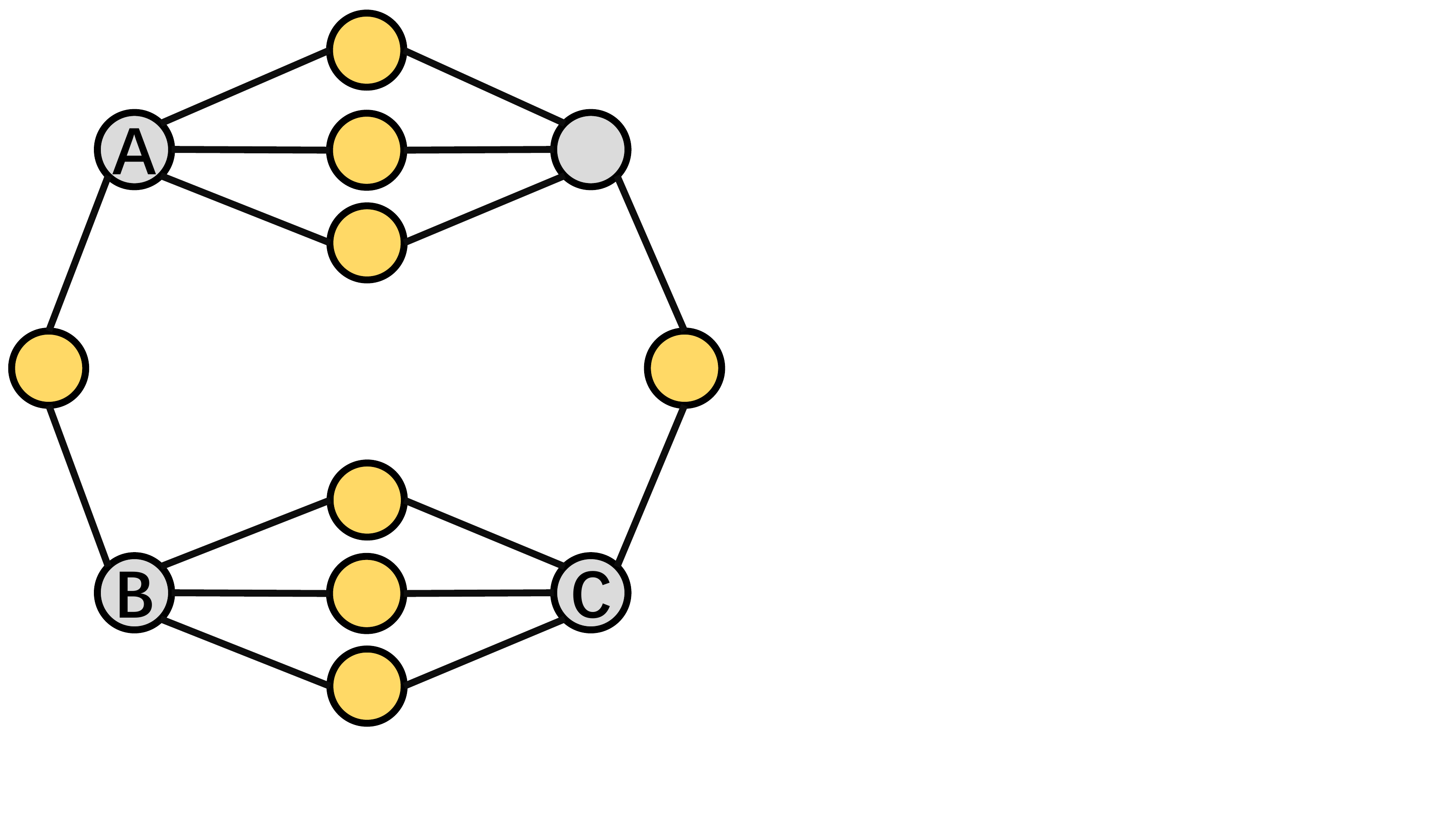}
  \end{center}
  \vspace{-10pt}
  \caption{}\label{fig:voverh}
\end{wrapfigure}
Figure \ref{fig:voverh} shows an example where vertical geodesic is more expressive than horizontal geodesic. The gray nodes are symmetric and hence will be assigned the same embedding by a 1-WL GNN, the yellow nodes, because they all connect to two gray nodes, will also be assigned the same embedding by a 1-WL GNN. In this case, both horizontal geodesics between AB and BC are gray-yellow-gray nodes, hence we can not differentiate the two links. Nevertheless, the vertical geodesic of AB is one yellow node, while the vertical geodesic of BC is three yellow nodes, and the two links will have different representations after summarizing the vertical geodesic information.

\end{document}